\documentclass[11pt]{article}
\usepackage[utf8]{inputenc}
\usepackage[margin=2cm,top=3cm,bottom=3cm]{geometry}
\usepackage{hyperref}
\usepackage{makecell}
\usepackage{subcaption}

\usepackage{graphicx}
\usepackage{epsfig}
\usepackage{url}
\usepackage{color}
\usepackage{trivfloat}

\usepackage{subcaption}
\usepackage{amsmath,amstext,amsbsy,amsopn,amscd,amsxtra,upref,amssymb,amsthm}
\usepackage{bm}
\usepackage{grffile}
\usepackage{algorithm}
\usepackage{algpseudocode}

\newcommand{\EE}{\mathbb{E}}
\newcommand{\PP}[1]{\mathbb{P}\left\{#1\right\}}

\newcommand{\trace}{\mathsf{tr}}

\renewcommand{\Vec}{\operatorname{vec}}

\newcommand{\normal}{\mathcal{N}}

\newcommand{\redvel}{\mathcal{O}}

\newcommand{\R}{\mathbb{R}}

\newcommand{\lam}{\lambda}

\newcommand{\eps}{\varepsilon}

\newcommand{\mb}[1]{\left[\begin{array}{#1}}
\newcommand{\me}{\end{array}\right]}
\newcommand{\smb}{\left[\begin{smallmatrix}}
\newcommand{\sme}{\end{smallmatrix}\right]}

\newcommand{\kron}{\otimes}

\newcommand{\Xmpow}[1]{X^{[#1]}}

\newcommand{\Ompow}[1]{\Omega^{[#1]}}

\newcommand{\Omle}[1]{\Omega^{\leq #1}}
\newcommand{\Omlek}{\Omega^{\leq k}}

\newcommand{\ple}[1]{\Pi^{\leq #1}}
\newcommand{\tildeple}[1]{\tilde{\Pi}^{\leq #1}}
\newcommand{\plek}{\Pi^{\leq k}}
\newcommand{\mi}{\mathcal{I}}
\newcommand{\mj}{\mathcal{J}}
\newcommand{\mile}[1]{\mathcal{I}_{\leq #1}}

\newcommand{\milek}{\mathcal{I}_{\leq k}}
\newcommand{\migk}{\mathcal{I}_{>k}}
\newcommand{\mig}[1]{\mathcal{I}_{>#1}}

\newcommand{\lCeil}{%
  \mathopen{%
    \vbox{%
      \offinterlineskip
      \halign{##\cr
        \hbox{\hskip 0.45ex \vrule height 0.4pt width 0.5ex}\cr   
        \noalign{\kern-1.5pt}
        \hbox{$\lceil$}\cr                           
      }%
    }%
  }%
}

\newcommand{\rCeil}{%
  \mathclose{%
    \vbox{%
      \offinterlineskip
      \halign{##\cr
        \hbox{\hskip 0.05ex \vrule height 0.4pt width 0.5ex}\cr   
        \noalign{\kern-1.5pt}
        \hbox{$\rceil$}\cr                           
      }%
    }%
  }%
}

\newcommand{\pinv}{\dagger}

\DeclareMathOperator{\Var}{Var}

\newtheorem{theorem}{Theorem}

\newtheorem{lemma}[theorem]{Lemma}
\newtheorem{proposition}[theorem]{Proposition}
\newtheorem{remark}[theorem]{Remark}
\newtheorem{corollary}[theorem]{Corollary}
\newtheorem{definition}[theorem]{Definition}

\title{Stochastic trace estimation with tensor train random vectors}

\author{Zvonimir Bujanovi\'{c}\footnote{University of Zagreb, Faculty of Science, Department of Mathematics, Croatia. {\tt zbujanov@math.hr}} \and Daniel Kressner\footnote{Institute of Mathematics, EPFL, Switzerland. {\tt daniel.kressner@epfl.ch}} \and Hrvoje Oli\'{c}\footnote{University of Zagreb, Faculty of Science, Department of Mathematics, Croatia. {\tt hrvoje.olic@math.hr}.}}

\begin{document}

\maketitle

\begin{abstract}
    Stochastic trace estimation is a standard tool for approximating the trace of a large-scale matrix available only through matrix-vector products. However, in tensor-structured settings, unstructured Gaussian or Rademacher test vectors may be prohibitively expensive to store and compute with, while cheaper rank-one tensor-product vectors can require sample complexities that grow exponentially with the tensor order. This work studies Gaussian random tensor train vectors as a structured alternative for stochastic trace estimation. We show that, with a suitable choice of the tensor train rank, random tensor train vectors recover dimension-independent guarantees for the Girard--Hutchinson estimator. In particular, a median-of-means variant with tensor train rank $r \geq d-1$ achieves the same dependence on the accuracy $\eps$ and failure probability $\delta$ as the classical estimator based on unstructured Gaussian vectors. We further prove an oblivious subspace injection result for sketches formed from independent Gaussian random tensor train vectors: tensor train rank $r\geq d-1$ and $\redvel(\eps^{-2}(k+\log(1/\delta)))$ samples suffice for a $k$-dimensional target subspace. Finally, we investigate the use of such sketches within the Nystr\"{o}m++ framework. We show that the resulting estimator can achieve the desired $\redvel(\eps^{-1})$ sample complexity under an additional spectral-tail condition. These results provide clarififcation on both the potential and the limitations of random tensor train vectors in stochastic trace estimation.
\end{abstract}

\section{Introduction}
\label{sec:intro}

The need for estimating the trace of a large-scale, symmetric positive semidefinite matrix $A \in \R^{N \times N}$ arises in a wide range of applications \cite{BaiFaheyGolub96, CortinovisKressner22,  EdenEtAl17, LinSaadYang16, UbaruChenSaad17, UbaruSaad18, Weisse06}. A common setting in these applications is that $A$ is only available through matrix-vector products $x \mapsto Ax$. Obviously, one could  simply evaluate all diagonal elements $A_{ii} = e_i^T (A e_i)$, by applying $A$ to all $N$ unit vectors $e_i$, and compute the trace exactly, but the $N$ matrix-vector products required for such a procedure render the computation prohibitively expensive. Instead, a stochastic approach proposed by Girard \cite{Girard89} and Hutchinson \cite{Hutchinson89} is preferable:
\begin{equation}
    \label{eq:GH}
    \trace(A) \approx H_{\ell}(A) := \frac{1}{\ell} \sum_{i=1}^\ell \omega_i^T A \omega_i.
\end{equation}
Here $\omega_1, \ldots, \omega_{\ell} \in \R^N$ are independent, identically distributed random vectors. For isotropic random vectors (i.e., $\EE[\omega_i \omega_i^T] = I_N$), it is well known that $H_{\ell}(A)$ is an unbiased estimator for $\trace(A)$: $\EE[H_{\ell}(A)] = \trace(A)$. The theoretical properties of the Girard--Hutchinson estimator \eqref{eq:GH} have been studied intensively, especially in the case when the vectors $\omega_i$ are Rademacher or Gaussian random vectors; see, e.g.,~\cite{AvronToledo11,CortinovisKressner22,RoostaAscher15}. In these cases, it is well known that a sample complexity of
$\ell = \redvel(\eps^{-2} \log(1/\delta))$ is sufficient to ensure
\begin{equation}
    \label{eq:trace-eps-delta}
    \PP{|\trace(A) - H_{\ell}(A)| \leq \eps \, \trace(A)} \geq 1-\delta.
\end{equation}
Variance reduction through randomized low-rank approximation can be used to reduce the dependence on $\eps$ from $\eps^{-2}$ to $\eps^{-1}$, as manifested by the Hutch++ \cite{MeyerMuscoMusco21} and Nystr\"om++~\cite{PerssonCortinovisKressner22} algorithms.

In this work, we consider the case when $N = n^d$ and, hence, $A$ can be viewed as a linear operator on tensors of order $d$. As $d$ increases, it quickly becomes
computationally infeasible to generate unstructured random vectors of length $n^d$ and apply $A$. On the other hand, in these cases, $A$ is usually highly structured and its 
application to vectors in a compatible tensor format can be executed efficiently. For example, consider the simple case of a Kronecker product: $A = A^{(1)} \kron A^{(2)} \kron \ldots \kron A^{(d)}$ with $A^{(i)} \in \R^{n \times n}$. Generating a random Gaussian vector $\omega \in \R^N$ and computing $A\omega$ would be rather inefficient. This is very different when considering a random \emph{rank-one tensor} of order $d$, i.e.,  $\omega = \omega^{(1)} \kron \omega^{(2)} \kron \ldots \kron \omega^{(d)}$ with small random vectors $\omega^{(j)} \in \R^n$.
Then the computation
$$
    \omega^T A\omega = ((\omega^{(1)})^T A^{(1)} \omega^{(1)}) \cdot ((\omega^{(2)})^T A^{(2)} \omega^{(2)}) \cdot \ldots \cdot ((\omega^{(d)})^T A^{(d)} \omega^{(d)})
$$
only involves small vectors, reducing the complexity from exponential to linear in $d$. Nevertheless, this comes at a steep price. Recent work \cite{AvronMeyer25} shows that random rank-one tensors may have poor concentration for trace estimation within the Girard--Hutchinson framework, with the sample complexity $\ell$ growing exponentially in the tensor order $d$ to attain fixed accuracy and failure probability. Similar exponential dependencies arise when collections of such vectors are used as sketching matrices in, e.g., the randomized SVD algorithm \cite{halko2011finding, CamanoEpperlyMeyerTropp25}, limiting their use to small values of $d$; see~\cite{BujanovicKressner21,BujanovicGrubisicKressnerLam25}. The work~\cite{ZhangKileel25} proposes median sketches, inspired by median-of-means, to obtain dimension-reduction guarantees for structured sets using tensor-structured or sparse measurements; however, the resulting sketch is not a linear mapping.

The limitations of random rank-one vectors raise the question of whether a richer tensor format can recover dimension-independent guarantees for stochastic trace estimation and sketching. Several alternative distributions of random vectors that are compatible with operators with tensor structure have been studied recently. Among them, the tensor train (TT) format \cite{Oseledets11} is particularly attractive: it has a low memory footprint, permits efficient computations with compatible operators, and offers a tunable tradeoff between computational cost and expressiveness through the TT rank $r$. Randomized algorithms based on random TT vectors have recently been successfully applied, especially in sketching methods for low-rank approximation and TT rounding \cite{DaasEtAl23,HuberSchneiderWolf17,BucciPallitaRobol25,KressnerVandereyckenVoorhaar23}. However, for a long time, the available theoretical guarantees have been elusive or, at least, rather pessimistic. A very recent contribution of Cazeaux et al.~\cite{CazeauxDupuyJustiniano26} significantly improves this situation by providing analysis for so called TTStack/BSTT sketches, variants of TT structured random sketching. It is shown that TTStack can satisfy oblivious subspace embedding and injection properties (needed, e.g., with the randomized SVD) avoiding the exponential dependence on the tensor order $d$ that appears for random rank-one tensor sketches. Instead, $d$ enters only linearly through the TT rank.

In this work, we discuss the use of random vectors with TT structure for the purpose of stochastic trace estimation. We show that, unlike the random rank-one tensors, the number of samples needed to achieve \eqref{eq:trace-eps-delta} does not depend on the tensor order $d$ if the TT rank $r$ is chosen appropriately. 
In particular, when using the median-of-means technique in combination with Girard--Hutchinson estimator, the TT rank of $r \geq d-1$ suffices to obtain a sample complexity of order $\redvel(\eps^{-2} \log(1/\delta))$, matching the dependence on $\eps$ and $\delta$ of the classical unstructured Gaussian estimator. 

We also prove an oblivious subspace injection result for Gaussian TT sketches, complementing the comparison in \cite[Table 1]{CazeauxDupuyJustiniano26}. More precisely, we show that a sketch consisting of $\redvel(\eps^{-2}(k + \log(1/\delta)))$ independent random TT vectors of rank $r \geq d-1$ satisfies a so-called $(1-\eps, \delta, k)$ oblivious subspace injection (OSI) property. This removes a linear dependence on $d$ from the sketch dimension compared with the corresponding TTStack guarantee, although $d$ still enters through the TT rank of the random vectors. Finally, we investigate whether these OSI guarantees are sufficient to use random TT vectors within the Nystr\"{o}m++ framework and thereby obtain an $\redvel(\eps^{-1})$-sample trace estimator. We show that this is the case under a spectral decay assumption.

The rest of this paper is structured as follows: In Section \ref{sec:prelim}, we recall the notion of Gaussian random TT vectors and establish their basic properties. Section~\ref{sec:gh} then discusses the sample complexity for Girard--Hutchinson estimators and its relation to the rank of random vectors. Our derivation relies on higher-order moment bounds for quantities involving such random vectors, which may be of independent interest. In Section \ref{sec:nystrom} we first establish an oblivious subspace injection property, and then combine it with the results of Section~\ref{sec:gh} to obtain results on Nystr\"om++ with Gaussian random TT vectors.

\section{Preliminaries on Gaussian random TT vectors}
\label{sec:prelim}

A tensor $X$ of order $d$ is a $d$-dimensional array. For simplicity, we will often assume that all modes have equal size, so that $X \in \R^{n\times n\times \cdots \times n}$.
The vectorization $x = \Vec(X) \in \R^{n^d}$ is obtained by arranging the entries of $X$ in anti-lexicographical order.  We use multi-index notation 
to access the entries of $x$: Given $\mi = (i_1, \ldots, i_d) \in \{1, 2, \ldots, n\}^d$, one writes
$$
    x_\mi = X(i_1,i_2,\ldots,i_d).
$$
We say that $x$ is a \emph{tensor train (TT) vector} if there are
matrices 
$\Xmpow{1} \in \R^{n \times r}$, 
$\Xmpow{d} \in \R^{r \times n}$, and third-order tensors 
$\Xmpow{j} \in \R^{r \times n \times r}$, $2 \leq j \leq d-1$ such that $x$ can be expressed through a contraction
of these so called \emph{TT cores} as follows:
\begin{equation}
 \label{eq:ttvector}
 x_\mi = \sum_{r_1, r_2, \ldots, r_{d-1} = 1}^r \Xmpow{1}_{i_1, r_1} \Xmpow{2}_{r_1, i_2, r_2} \ldots \Xmpow{d}_{r_{d-1}, i_d}
\end{equation}
In practice, the TT ranks (that is, the summation  ranges in the contractions between adjacent TT cores) may vary from one contraction to another. To simplify notation and as there is no major difference from a theoretical point of view, we assume that all TT ranks are equal to a single rank $r$.

Up to scaling, a Gaussian random TT vector is obtained by choosing each TT core independently and Gaussian~\cite{DaasEtAl23,rakhshan2020tensorized}.
\begin{definition}
    \label{def:gaussian-random-tt-vector}
    A Gaussian random TT vector $\omega \in \R^{n^d}$ of rank $r$ takes the form 
    \begin{equation}
        \label{eq:random-tt-definition}        
        \omega_{\mi} = 
            \frac{1}{\sqrt{r^{d-1}}} \sum_{r_1, r_2, \ldots, r_{d-1}} \Ompow{1}_{i_1, r_1} \Ompow{2}_{r_1, i_2, r_2} \ldots \Ompow{d}_{r_{d-1}, i_d},     
            \quad \mi \in \{1, 2, \ldots, n\}^d,
    \end{equation}
    where every entry of the TT cores
    $\Ompow{1}, \ldots, \Ompow{d}$
    is an independent standard normal random variable.
\end{definition}

Gaussian random TT vectors are centered and isotropic.
\begin{lemma}
    \label{lemma:isotropic}
    Given a Gaussian random TT vector $\omega$, it holds that $\EE[\omega] = 0$ and $\EE[\omega \omega^T] = I$.
\end{lemma}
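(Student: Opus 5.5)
The plan is to compute both moments entrywise from the multilinear formula \eqref{eq:random-tt-definition}, using only the independence of the TT cores and the first two moments of standard normal variables.

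For the mean, each entry $\omega_\mi$ is a sum of products $\Ompow{1}_{i_1,r_1}\Ompow{2}_{r_1,i_2,r_2}\cdots\Ompow{d}_{r_{d-1},i_d}$. Every factor in such a product comes from a different core, so the factors are independent. Hence the expectation of each product factorizes into a product of expectations, each of which is zero. By linearity, $\EE[\omega_\mi]=0$ for every multi-index $\mi$.

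For the covariance, I fix multi-indices $\mi=(i_1,\ldots,i_d)$ and $\mj=(j_1,\ldots,j_d)$. Writing the product $\omega_\mi\omega_\mj$ as a double sum over index tuples $(r_1,\ldots,r_{d-1})$ and $(s_1,\ldots,s_{d-1})$ gives
\[
\EE[\omega_\mi\omega_\mj]=\frac{1}{r^{d-1}}\sum_{r_\cdot,s_\cdot}\EE\bigl[\Ompow{1}_{i_1,r_1}\Ompow{1}_{j_1,s_1}\bigr]\,\EE\bigl[\Ompow{2}_{r_1,i_2,r_2}\Ompow{2}_{s_1,j_2,s_2}\bigr]\cdots\EE\bigl[\Ompow{d}_{r_{d-1},i_d}\Ompow{d}_{s_{d-1},j_d}\bigr].
\]
The factorization across cores again follows from the independence of distinct cores. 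Within a single core the entries are i.i.d.\ standard normal, so each factor equals a product of Kronecker deltas. For instance, the middle factors are $\delta_{r_{k-1}s_{k-1}}\delta_{i_kj_k}\delta_{r_ks_k}$.

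The product of these factors therefore vanishes unless $i_k=j_k$ for all $k$ and $r_k=s_k$ for all $k$. When $\mi=\mj$, exactly $r^{d-1}$ tuples $r_\cdot=s_\cdot$ survive, each contributing $1$. This cancels the normalization factor $1/r^{d-1}$, so $\EE[\omega_\mi\omega_\mj]=\delta_{\mi\mj}$, which is $\EE[\omega\omega^T]=I$.

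None of these steps is a real obstacle. The only care needed is bookkeeping: a given core entry can appear twice in the product $\omega_\mi\omega_\mj$, once from each factor. This is why I group the factors by core before using independence, rather than treating all $2d$ factors as independent.
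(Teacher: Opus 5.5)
Your proof is correct and follows the same route as the paper: you factor each summand's expectation across the independent cores, so the mean vanishes and the second moment reduces to a product of per-core covariances. Your explicit Kronecker-delta bookkeeping, which shows that the cross terms with $(r_1,\ldots,r_{d-1})\neq(s_1,\ldots,s_{d-1})$ vanish even when $\mi=\mj$, is slightly more careful than the paper's write-up, which passes directly to the diagonal terms.
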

\begin{proof}
This result can be found in, e.g.,~\cite{rakhshan2020tensorized}. We include its proof for completeness.
For each summand in~\eqref{eq:random-tt-definition}, it holds that
$$
    \EE[\Ompow{1}_{i_1, r_1} \Ompow{2}_{r_1, i_2, r_2} \ldots \Ompow{d}_{r_{d-1}, i_d}] = \EE[\Ompow{1}_{i_1, r_1}] \EE[\Ompow{2}_{r_1, i_2, r_2}] \ldots \EE[\Ompow{d}_{r_{d-1}, i_d}] = 0,
$$
using the independence of TT cores. Hence, $\omega$ is centered. To establish isotropy, it again follows from independence that 
    \begin{align*}        
        \EE[\omega_{\mi} \omega_{\mj}]
            &= \frac{1}{r^{d-1}} \sum_{r_1^{\mi}, \ldots, r_{d-1}^{\mi} \atop r_1^{\mj}, \ldots, r_{d-1}^{\mj}} 
                \EE[\Ompow{1}_{i_1, r_1^{\mi}} \Ompow{1}_{j_1, r_1^{\mj}}] \; \cdot \;
                \EE[\Ompow{2}_{r_1^{\mi}, i_2, r_2^{\mi}} \Ompow{2}_{r_1^{\mj}, j_2, r_2^{\mj}}] \; \cdot \; 
                \ldots  \; \cdot \; 
                \EE[\Ompow{d}_{r_{d-1}^{\mi}, i_d} \Ompow{d}_{r_{d-1}^{\mj}, j_d}]
    \end{align*}
    holds for two multiindices $\mi$, $\mj$. It follows that $\EE[\omega_{\mi} \omega_{\mj}] = 0$ when $\mi \neq \mj$. When  $\mi = \mj$, we observe that
    \begin{align*}        
        \EE[\omega_{\mi} \omega_{\mi}]
            &= \frac{1}{r^{d-1}} \sum_{r_1, \ldots, r_{d-1}} 
                \EE[(\Ompow{1}_{i_1, r_1})^2] \; \cdot \;
                \EE[(\Ompow{2}_{r_1, i_2, r_2})^2] \; \cdot \; 
                \ldots  \; \cdot \; 
                \EE[(\Ompow{d}_{r_{d-1}, i_d})^2] = 1, 
    \end{align*}
    where the last equality follows from the well-known fact that the second moment of a standard normal random variable equals $1$.    
\end{proof}

\section{Girard--Hutchinson trace estimation}
\label{sec:gh}

Given a symmetric positive semidefinitive matrix $A \in \R^{n^d \times n^d}$, the classical Girard--Hutchinson trace estimator approximates the trace $\trace(A)$ through sampling quadratic forms,
$$
    H_{\ell}(A) := \frac{1}{\ell} \sum_{i=1}^\ell \omega_i^T A \omega_i
$$
for independent (usually identically distributed) random vectors $\omega_i$. When choosing Gaussian random TT vectors $\omega_i$, it follows immediately from Lemma~\ref{lemma:isotropic} that $H_{\ell}(A)$ is an unbiased trace estimator:
$$
    \EE[H_{\ell}(A)] = \EE[\omega_1^T A \omega_1 ] = \EE[\trace( \omega_1 \omega_1^T  A ) ] = \trace(A).
$$
In order to derive probabilistic guarantees on the estimator's accuracy, we first need to derive some moment bounds.

\subsection{Moment bounds for Gaussian random TT vectors}

The following elementary lemma will be helpful when deriving moment bounds for quantities involving Gaussian random TT vectors.
\begin{lemma}
    \label{lemma:expected-Fnorm-moment}
    Consider a random matrix $\Omega \in \R^{s \times r}$ with independent columns distributed according to $\normal(0, XX^T)$ for some $X \in \R^{s \times t}$. Then
    $$
        \EE[\; \|\Omega\|_F^{2p}\; ] \leq r(r+2)(r+4)\ldots (r+2(p-1)) \cdot \|X\|_F^{2p}, \quad \text{for all integers } p \geq 1.
    $$
    Equality holds when $X$ has rank at most $1$.
\end{lemma}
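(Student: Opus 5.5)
We write each column as $\Omega_{:,j} = X g_j$ with independent standard Gaussian vectors $g_j \in \R^t$. Then $\|\Omega\|_F^2 = \sum_{j=1}^r g_j^T X^T X g_j$. Let $X^TX = V\Sigma V^T$ with eigenvalues $\sigma_1^2,\ldots,\sigma_t^2$. By rotational invariance of $g_j$ we may replace $g_j$ by $V^T g_j$. This turns the quantity into
$$
\|\Omega\|_F^2 \stackrel{d}{=} \sum_{k=1}^t \sigma_k^2 Q_k, \qquad Q_k := \sum_{j=1}^r g_{jk}^2 \sim \chi^2_r,
$$
where $Q_1,\ldots,Q_t$ are independent. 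We also note that $\sum_k \sigma_k^2 = \|X\|_F^2$.

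The key observation is that $f(\sigma^2)=\EE[(\sum_k \sigma_k^2 Q_k)^p]$ is a convex function of the weight vector $(\sigma_1^2,\ldots,\sigma_t^2)$, since $x\mapsto x^p$ is convex on $[0,\infty)$ and the map to $x$ is linear. By homogeneity it suffices to consider the simplex $\sum_k \sigma_k^2 = \|X\|_F^2$. A convex function attains its maximum over the simplex at a vertex, i.e., at $\sigma^2 = \|X\|_F^2 e_k$. There the quantity equals $\|X\|_F^{2p}\,\EE[Q_k^p]$. Equivalently, we can avoid convexity and apply Jensen directly to the convex combination $\sum_k (\sigma_k^2/\|X\|_F^2) Q_k$, which gives $\EE[(\sum_k \tfrac{\sigma_k^2}{\|X\|_F^2} Q_k)^p] \le \sum_k \tfrac{\sigma_k^2}{\|X\|_F^2}\EE[Q_k^p] = \EE[Q_1^p]$. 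This uses only that each $Q_k$ has the same distribution, not their independence. I would present this Jensen version; it is a one-liner.

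It remains to recall the chi-squared moment formula $\EE[Q^p] = r(r+2)\cdots(r+2(p-1))$ for $Q\sim\chi^2_r$. This follows from $\EE[Q^p] = 2^p\Gamma(r/2+p)/\Gamma(r/2)$, which comes from the density, or inductively from the moment generating function $(1-2s)^{-r/2}$. For the equality case, if $\operatorname{rank}X\le 1$ then only one $\sigma_k$ is nonzero, the sum has a single term, and Jensen is tight. This also covers $X=0$ trivially. The only point requiring care is justifying the rotation step, i.e., that $V^Tg_j$ is again standard Gaussian and that the coordinates remain independent across $k$ and $j$. This is standard, so I expect no real obstacle.
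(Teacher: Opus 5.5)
Your proof is correct and follows the paper's route. Both arguments use the representation $\Omega = XG$, the SVD/eigendecomposition of $X$, and rotational invariance to write $\|\Omega\|_F^2$ as $\sum_k \sigma_k^2 Q_k$ with $Q_k \sim \chi^2_r$, and then apply the chi-squared moment formula. The only difference is cosmetic: the paper bounds $\|\sum_k \sigma_k^2 Q_k\|_{L_p}$ with Minkowski's inequality, whereas you apply Jensen to the convex combination with weights $\sigma_k^2/\|X\|_F^2$, which gives the same bound because the $Q_k$ are identically distributed.
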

\begin{proof}
By definition, we can write $\Omega = X \tilde{\Omega}$ for a Gaussian random matrix $\tilde{\Omega} \in \R^{t \times r}$.
Considering the singular value decomposition $X = U \Sigma V^T$ with orthogonal square matrices $U,V$, we let $\hat{\Omega} = V^T \tilde{\Omega} \in \R^{t \times r}$, which is again a Gaussian random matrix. We then have
    $$
        \|\Omega\|_F^2
            = \|U \Sigma V^T \tilde{\Omega}\|_F^2
            = \|\Sigma \hat{\Omega}\|_F^2 = \sigma_1^2 \chi_{r}^{(1)} + \cdots + \sigma_m^2 \chi_{r}^{(m)},
    $$
    where $m=\min\{s, t\}$, and $\chi_{r}^{(i)}$ are independent chi-squared random variables with $r$ degrees of freedom. 
    Using the Minkowski inequality, and a well-known formula for the $p$th moment of a chi-squared variable, we get
    \begin{align*}        
        \EE[\;\|\Omega\|_F^{2p}\;]^{1/p}
            &=\EE\Big[ \Big(\sum_{i=1}^m \sigma_i^2 \chi_{r}^{(i)}\Big)^p \Big]^{1/p}
            \leq \sum_{i=1}^m \EE[ (\sigma_i^2 \chi_{r}^{(i)})^p ]^{1/p}
            = \sum_{i=1}^m \sigma_i^2 \EE[ (\chi_{r}^{(i)})^p ]^{1/p} \\
            &= \left( r(r+2)(r+4)\ldots (r+2(p-1)) \right)^{1/p} \|X\|_F^2.
    \end{align*}
    Note that the inequality above becomes an equality when $\sigma_i = 0$ for $i \ge 2$.
\end{proof}

Before proceeding,  it will be beneficial to introduce some additional notation commonly used in the TT literature. Given a Gaussian random TT vector $\omega$ defined in~\eqref{eq:random-tt-definition} and 
$1 \le k \le d-1$, we let $\Omega^{\leq k} \in \R^{n^k \times r}$ denote the \emph{left interface matrix} obtained by contracting the first $k$ TT cores:
$$
    \Omega^{\leq k}_{(i_1,\ldots,i_k),\,r_k}
        = \Omega^{\leq k}_{\milek, r_k} = 
            \sum_{r_1,\ldots,r_{k-1}}
            \Omega^{[1]}_{i_1,r_1}
            \Omega^{[2]}_{r_1,i_2,r_2}
            \cdots
            \Omega^{[k]}_{r_{k-1},i_k,r_k},
$$
with the multiindex $\milek=(i_1, \ldots, i_k)$. The definition of a TT vector implies nestedness relations among the interface matrices:
\begin{equation}
    \label{eq:tt}
       \Omega^{\leq k}_{\milek, r_k} = \sum_{r_{k-1}} \Omega^{\leq k-1}_{\mi_{\le k-1}, r_{k-1}} \Omega^{[k]}_{r_{k-1},i_k, r_k} \text{ for } 2 \leq k \le d-1, \quad 
       \omega_{\mi} =  \frac{1}{\sqrt{r^{d-1}}} \sum_{r_{d-1}} \Omega^{\leq d-1}_{\mile{d-1}, r_{d-1}} \Omega^{[d]}_{r_{d-1},i_d}.
\end{equation}

An arbitrary vector $u \in \R^{n^d}$ can be reshaped / unfolded into matrices in many different ways. Particularly relevant to TT, the TT unfoldings
$U^{(k)} \in \R^{n^k\times n^{d-k}}$, for $k = 1,\ldots,d-1$, are defined entry-wise by
$$
    U^{(k)}_{\milek, \migk} = u_{\mi},\quad 
        \milek=(i_1, \ldots, i_k), \quad \migk=(i_{k+1}, \ldots, i_d).
$$

\begin{lemma}
    \label{lemma:pth-moment-tower}
    Considering a Gaussian random TT vector $\omega \in \R^{n^d}$ of rank $r$
    and a vector $u \in \R^{n^d}$ such that $\|u\|_2 = 1$, define $\plek := (U^{(k)})^T \Omlek \in \R^{n^{d-k} \times r}$ using the notation defined above. Then the following statements hold for every integer $p \geq 1$:
    \begin{itemize}
        \item[(a)] $\EE[ (u^T \omega)^{2p} \; | \; \Ompow{1}, \Ompow{2}, \ldots, \Ompow{d-1} ] = \frac{(2p-1)!!}{r^{p(d-1)}} \|\ple{d-1}\|_F^{2p}$,
        \item[(b)] $\EE[ \|\ple{k}\|_F^{2p} \; | \; \Ompow{1}, \Ompow{2}, \ldots, \Ompow{k-1} ] \leq r^{p}(1+\frac{2}{r})(1+\frac{4}{r}) \cdots (1+\frac{2(p-1)}{r}) \|\ple{k-1}\|_F^{2p}$, \\ for $k=d-1, d-2, \ldots, 3, 2$,
        \item[(c)] $\EE[ \|\ple{1}\|_F^{2p}] \leq r^{p}(1+\frac{2}{r})(1+\frac{4}{r}) \cdots (1+\frac{2(p-1)}{r})$.
    \end{itemize}
    If $u = u_1 \kron u_2 \kron \ldots \kron u_d$ for vectors $u_i \in \R^n$, then equality holds in all statements.
\end{lemma}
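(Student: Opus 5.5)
The plan is to condition on the cores one at a time, peeling from the last core backwards. Each step uses either Gaussianity of a single linear form or Lemma~\ref{lemma:expected-Fnorm-moment}. Throughout we use the nestedness relations~\eqref{eq:tt} to express $\plek$ in terms of $\ple{k-1}$ and the core $\Ompow{k}$.

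For (a), condition on $\Ompow{1},\ldots,\Ompow{d-1}$. By~\eqref{eq:tt}, $u^T\omega = r^{-(d-1)/2}\sum_{\mile{d-1},i_d,r_{d-1}} u_\mi \Omle{d-1}_{\mile{d-1},r_{d-1}}\Ompow{d}_{r_{d-1},i_d}$. This equals $r^{-(d-1)/2}\langle \ple{d-1}, (\Ompow{d})^T\rangle_F$, since $\ple{d-1} = (U^{(d-1)})^T\Omle{d-1}\in\R^{n\times r}$ has entries $\sum_{\mile{d-1}} u_\mi \Omle{d-1}_{\mile{d-1},r_{d-1}}$. Conditionally, this is a centered Gaussian with variance $r^{-(d-1)}\|\ple{d-1}\|_F^2$, because the entries of $\Ompow{d}$ are i.i.d.\ standard normal and independent of the conditioning. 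The formula for Gaussian moments, $\EE[g^{2p}]=(2p-1)!!\,\sigma^{2p}$, then gives (a) with equality, for every $u$.

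For (b), fix $2\le k\le d-1$ and condition on $\Ompow{1},\ldots,\Ompow{k-1}$. By~\eqref{eq:tt}, the column $r_k$ of $\plek$ has entries
\[
\plek_{\migk,r_k}=\sum_{i_k,r_{k-1}} \ple{k-1}_{(i_k,\migk),r_{k-1}}\,\Ompow{k}_{r_{k-1},i_k,r_k}.
\]
Here $\ple{k-1}$ is indexed by $\mig{k-1}=(i_k,\migk)$. Reshape $\ple{k-1}$ into a matrix $X\in\R^{n^{d-k}\times (n r)}$ with rows $\migk$ and columns $(i_k,r_{k-1})$, so that $\|X\|_F=\|\ple{k-1}\|_F$. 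The columns of $\plek$ are then $X g_{r_k}$, where the slices $g_{r_k}=\Ompow{k}_{:,:,r_k}$ are independent standard Gaussian vectors in $\R^{nr}$. Hence the columns are independent with distribution $\normal(0,XX^T)$. Lemma~\ref{lemma:expected-Fnorm-moment}, with $s=n^{d-k}$ and $t=nr$, gives
\[
\EE[\|\plek\|_F^{2p}\mid\cdot]\le r(r+2)\cdots(r+2(p-1))\,\|X\|_F^{2p},
\]
which is exactly the claimed bound once $r^p$ is factored out. Part (c) is the same argument with no conditioning. Since $\ple{1}_{\mig{1},r_1}=\sum_{i_1}u_\mi\Ompow{1}_{i_1,r_1}$, we take $X=(U^{(1)})^T$, whose Frobenius norm is $\|u\|_2=1$.

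For the equality case $u=u_1\kron\cdots\kron u_d$, the equality clause of Lemma~\ref{lemma:expected-Fnorm-moment} requires $X$ to have rank at most one. The main (mild) obstacle is checking this at each step. For (c), $X=(U^{(1)})^T=(u_{>1})u_1^T$ is rank one. For (b), the factorization of $u$ gives $\ple{k-1}_{(i_k,\migk),r_{k-1}} = (u_k)_{i_k}(u_{>k})_{\migk}\,c_{r_{k-1}}$, where $c=\Omle{k-1\,T}(u_1\kron\cdots\kron u_{k-1})$ and $u_{>k}=u_{k+1}\kron\cdots\kron u_d$. The reshaped matrix is therefore $X=u_{>k}\,(u_k\kron c)^T$, which is rank one for every realization of the conditioning. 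Hence equality holds in (b) and (c), and (a) is an equality in general.
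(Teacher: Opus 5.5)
Your proposal is correct and follows the paper's proof essentially step for step. Part (a) uses the conditional Gaussianity of $\trace(\ple{d-1}\Ompow{d})$. Parts (b)--(c) reshape $\ple{k-1}$ (respectively take $(U^{(1)})^T$) so that the columns of $\plek$ are independent $\normal(0,XX^T)$, and then apply Lemma~\ref{lemma:expected-Fnorm-moment}. The equality case follows from the rank-one structure of the reshaped matrices when $u$ is an elementary tensor.
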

\begin{proof}
\begin{itemize}
\item[(a)]
    Using \eqref{eq:tt}, we find that
    \begin{align*}
        u^T \omega
            &= \sum_{\mi} u_\mi \omega_\mi 
            = \frac{1}{\sqrt{r^{d-1}}}\sum_{\mile{d-1}, i_{d}, r_{d-1}} u_{\mile{d-1}, i_d} \Omle{d-1}_{\mile{d-1}, r_{d-1}} \Ompow{d}_{r_{d-1},i_d} 
            = \frac{1}{\sqrt{r^{d-1}}} \trace\big( \ple{d-1} \Ompow{d}\big).
    \end{align*}
    Conditioned on $\Ompow{1}, \ldots, \Ompow{d-1}$, the matrix $\ple{d-1}$ becomes deterministic and
    $$
        \trace\big( \ple{d-1} \Ompow{d}\big) \sim \normal(0, \|\ple{d-1}\|^2_F),
    $$
    using that the left expression can be seen as the inner product between the vectorization of $\ple{d-1}$ and a Gaussian random vector.
    Since $\EE[\gamma^{2p}] = (2p-1)!!\sigma^{2p}$ for $\gamma \sim \normal(0, \sigma^2)$,
    it follows that
    $$            
        \EE\big[ (u^T \omega)^{2p} \; | \; \Ompow{1}, \ldots, \Ompow{d-1} \big] 
            = \frac{(2p-1)!!}{r^{p(d-1)}}\|\ple{d-1}\|^{2p}_F.
    $$
    
\item[(b)]
    Consider the entry $(\migk, r_k)$ of the matrix $\plek \in \R^{n^{d-k} \times r}$, and use~\eqref{eq:tt}:
    \begin{align}
        \plek_{\migk, r_k} 
            &= \sum_{\milek} U^{(k)}_{\milek,\migk} \Omle{k}_{\milek, r_k}  
            = \sum_{\mi_{\le k-1}, i_k} \sum_{r_{k-1}} U^{(k)}_{\mi_{\le k-1}, i_k, \migk} \Omle{k-1}_{\mi_{\le k-1},r_{k-1}} \Ompow{k}_{r_{k-1},i_k,r_k}  \nonumber  \\ 
            &= \sum_{i_k, r_{k-1}} \ple{k-1}_{(i_k, \migk), r_{k-1}} \Ompow{k}_{r_{k-1},i_k,r_k}. \label{eq:blabla}
    \end{align}   
    Conditioned on $\Ompow{1}, \ldots, \Ompow{k-1}$, the matrix $\ple{k-1}$ becomes deterministic.
    Reshaping $\ple{k-1}$ into the matrix $\tildeple{k-1} \in \R^{n^{d-k} \times rn}$ by setting
$\widetilde{\Pi}^{\leq k-1}_{I_{>k},\,(i_k,r_{k-1})}:=
\Pi^{\leq k-1}_{(i_k,I_{>k}),\,r_{k-1}}$, it follows from~\eqref{eq:blabla} that the $r_k$-th column
    $\plek_{:, r_k}$ of $\plek$ is given by the product of $\tildeple{k-1}$ with the Gaussian random vector obtained from the vectorization of the slice
    $\Ompow{k}_{:,:,r_k}$. It follows that the columns $\plek_{:, r_k}$ are independent and
    $$
        \plek_{:, r_k} \sim \normal(0, \tildeple{k-1}(\tildeple{k-1})^T).
    $$
    Applying Lemma \ref{lemma:expected-Fnorm-moment} gives
    \begin{align*}        
        \EE[\|\plek\|_F^{2p} \; | \; \Ompow{1}, \Ompow{2}, \ldots, \Ompow{k-1}]
            &\leq r^{p}\Big(1+\frac{2}{r}\Big)\Big(1+\frac{4}{r}\Big)\cdots\Big(1+\frac{2(p-1)}{r}\Big) \|\tildeple{k-1}\|_F^{2p} \\
            &= r^{p}\Big(1+\frac{2}{r}\Big)\Big(1+\frac{4}{r}\Big)\cdots\Big(1+\frac{2(p-1)}{r}\Big) \|\ple{k-1}\|_F^{2p}.
    \end{align*}

\item[(c)]
    Consider the entry $(\mig{1}, r_1)$ of the matrix $\ple{1} \in \R^{n^{d-1} \times r}$ and note that $\Omle{1} = \Ompow{1}$:
    \begin{align*}
        \ple{1}_{\mig{1}, r_1} 
            &= \sum_{\mile{1}} U^{(1)}_{\mile{1},\mig{1}} \Omle{1}_{\mile{1}, r_1}  
            = \sum_{i_1} U^{(1)}_{i_1,\mig{1}} \Ompow{1}_{i_1, r_1}.
    \end{align*}
    As in case (b), it follows that the columns $\ple{1}_{:, r_1}$ are independently distributed according to
    $$
        \ple{1}_{:, r_1} \sim \normal(0, (U^{(1)})^TU^{(1)}).
    $$
Thus, Lemma \ref{lemma:expected-Fnorm-moment} applies again and gives
    \begin{align*}
        \EE[\|\ple{1}\|_F^{2p}] 
            &\leq r^{p} \Big(1+\frac{2}{r}\Big)\Big(1+\frac{4}{r}\Big)\ldots\Big(1+\frac{2(p-1)}{r}\Big) \|U^{(1)}\|_F^{2p}.
    \end{align*}
    The claim follows by noticing that $\|U^{(1)}\|_F = \|u\|_2 = 1$.
\end{itemize}
If $u=u_1 \kron u_2 \kron \ldots \kron u_d$, it follows that all TT unfoldings $U^{(k-1)} = (u_1 \kron \ldots \kron u_{k-1})(u_{k} \kron \ldots \kron u_d)^T$ are of rank 1, and so are $\ple{k-1} = (U^{(k-1)})^T \Omle{k-1} = (u_{k} \kron \ldots \kron u_d) b_{k-1}^T$, for some vector $b_{k-1}$. Then $\tildeple{k-1} = (u_{k+1} \kron \ldots \kron u_d) (b_{k-1} \kron u_k)^T$, which is once again a matrix of rank 1.
Thus, using the last statement of Lemma \ref{lemma:expected-Fnorm-moment}, all inequalities appearing in the proof of (b) and (c) turn into equalities.
\end{proof}

Recursively applying Lemma \ref{lemma:pth-moment-tower} gives the main result of this section.

\begin{theorem}
    \label{tm:moment}  
        Consider a Gaussian random TT vector $\omega \in \R^{n^d}$ of rank $r$
    and a vector $u \in \R^{n^d}$ such that $\|u\|_2 = 1$. 
    Then, for all integers $p \geq 1$, it holds that 
    $$
        \EE[(u^T \omega)^{2p}] 
            \leq (2p-1)!! \Big(\Big(1 + \frac{2}{r}\Big)\Big(1 + \frac{4}{r}\Big)\cdots\Big(1 + \frac{2(p-1)}{r}\Big) \Big)^{d-1}
            \leq (2p-1)!! \; e^{\frac{p^2(d-1)}{r}}.
    $$
    If $u = u_1 \kron u_2 \kron \ldots \kron u_d$ for vectors $u_i \in \R^n$, then the first inequality above turns into an equality.
\end{theorem}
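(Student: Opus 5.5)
The plan is to chain the three parts of Lemma~\ref{lemma:pth-moment-tower} using the tower property of conditional expectation. Write $c_p := (1+\frac{2}{r})(1+\frac{4}{r})\cdots(1+\frac{2(p-1)}{r})$. I would start from part (a) and take the full expectation:
$$
\EE[(u^T\omega)^{2p}] = \EE\big[\EE[(u^T\omega)^{2p}\mid \Ompow{1},\ldots,\Ompow{d-1}]\big] = \frac{(2p-1)!!}{r^{p(d-1)}}\,\EE\big[\|\ple{d-1}\|_F^{2p}\big].
$$
Next I would peel off one core at a time. For $k=d-1,d-2,\ldots,2$, condition on $\Ompow{1},\ldots,\Ompow{k-1}$ and apply part (b). Since $\|\ple{k-1}\|_F$ is measurable with respect to those cores, this gives
$$
\EE\big[\|\ple{k}\|_F^{2p}\big] \le r^p c_p\, \EE\big[\|\ple{k-1}\|_F^{2p}\big].
$$
After $d-2$ such steps I reach $\EE\|\ple{1}\|_F^{2p}$, which part (c) bounds by $r^p c_p$. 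Altogether,
$$
\EE\big[\|\ple{d-1}\|_F^{2p}\big] \le (r^p c_p)^{d-1},
$$
and the factor $r^{p(d-1)}$ cancels against the prefactor from (a). This yields the first inequality. If $d=2$, there are no (b)-steps, and (a) is combined with (c) directly.

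For the second inequality, I would use $1+x\le e^x$ on each factor:
$$
c_p \le \exp\Big(\tfrac{2}{r}\textstyle\sum_{j=1}^{p-1} j\Big) = e^{p(p-1)/r} \le e^{p^2/r}.
$$
Raising to the power $d-1$ gives the stated bound.

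For the equality case, Lemma~\ref{lemma:pth-moment-tower} states that when $u$ is an elementary Kronecker product, (a), (b) and (c) all hold with equality. The chain then consists only of equalities, since taking expectations of conditional identities preserves them. Hence the first inequality is an equality.

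The main point needing care is the step from the conditional bounds in (b) to unconditional ones. This step is legitimate because the factor $r^p c_p$ is deterministic and the inequality holds almost surely for the conditional expectation, so monotonicity of expectation applies. Beyond that, the argument is routine bookkeeping.
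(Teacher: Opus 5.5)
Your proposal is correct and follows the paper's own argument. You chain parts (a), (b), (c) of Lemma~\ref{lemma:pth-moment-tower} through the tower rule, which gives the first inequality and, for Kronecker-product $u$, the equality case; you then bound each factor by $1+x\le e^x$, equivalent to the paper's $\log(1+x)\le x$, which gives the second inequality. Your explicit bookkeeping of the $r^{p(d-1)}$ cancellation and of the case $d=2$ fills in details the paper leaves implicit.
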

\begin{proof}
    The first inequality is obtained by repeatedly combining the tower rule for expectation with Lemma~\ref{lemma:pth-moment-tower}. The second inequality follows from noting
    that $$
        \log\prod_{i=1}^{p-1} \Big(1 + \frac{2i}{r}\Big)
            = \sum_{i=1}^{p-1} \log\Big(1 + \frac{2i}{r}\Big)
            \leq \sum_{i=1}^{p-1} \frac{2i}{r}
            = \frac{p(p-1)}{r} 
            \leq \frac{p^2}{r}.
    $$
\end{proof}

We will use the following variation of standard results for subexponential random variables~\cite{Vershynin2018High-dimensional}.
\begin{proposition} \label{prop:Lp-mean-bound}
Let $p\geq 2$ be an even integer, and let
$X_1,\ldots,X_\ell$ be independent centered random variables satisfying
$$
    \|X_i\|_{L_q}\leq Lq,
        \qquad i=1,\ldots,\ell,
        \qquad q=2,\ldots,p.
$$
Then
$$
    \Big\|
        \frac{1}{\ell}\sum_{i=1}^{\ell}X_i
    \Big\|_{L_p}
    \leq e^{3/2}L
    \left(
        \sqrt{\frac{p}{\ell}}
        +
        \frac{p}{\ell}
    \right).
$$
\end{proposition}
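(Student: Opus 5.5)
Since $p$ is an even integer, $\|S\|_{L_p}^p = \EE[S^p]$ for $S=\sum_{i=1}^\ell X_i$. I will expand this moment with the multinomial theorem and use independence and centering to discard terms; this avoids symmetrization and sub-exponential norm conversions, which would spoil the explicit constant. Concretely,
$$
\EE[S^p] = \sum_{k_1+\cdots+k_\ell=p} \binom{p}{k_1,\ldots,k_\ell} \prod_{i=1}^\ell \EE[X_i^{k_i}].
$$
Every term with some $k_i=1$ vanishes because the $X_i$ are centered, so only multi-indices whose nonzero entries are all at least $2$ remain. For such a term, with $m$ nonzero entries $q_1,\ldots,q_m\ge 2$ summing to $p$ (so $m\le p/2$), the hypothesis gives $|\EE[X_i^{q}]|\le \|X_i\|_{L_q}^q\le (Lq)^q$. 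Hence
$$
\EE[S^p]\le L^p\sum_{m=1}^{p/2}\binom{\ell}{m}\sum_{\substack{q_1+\cdots+q_m=p\\ q_j\ge2}}\frac{p!}{q_1!\cdots q_m!}\prod_{j} q_j^{q_j}.
$$

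Next I estimate this combinatorial sum. Since $q^q/q!\le e^q$, the inner summand is at most $p!\,e^p$. The number of compositions of $p$ into $m$ parts each $\ge 2$ is $\binom{p-m-1}{m-1}\le 2^{p}$, though a sharper count may be needed. Using $\binom{\ell}{m}\le \ell^m/m!$ and $p!\le p^p e^{-p}\sqrt{2\pi p}\,e^{1/12}$, the $m$-th term is bounded roughly by $(\ell^m/m!)\,p^p\cdot C^p$. To compare with the target, note
$$
\Big(\sqrt{p\ell}+p\Big)^p \ge \binom{p}{2m}(\sqrt{p\ell})^{2m}p^{p-2m}=\binom{p}{2m}\ell^m p^{p-m}.
$$
So it suffices to show that each $m$-term is at most $(e^{3/2})^p$ times $\binom{p}{2m}\ell^m p^{p-m}$, after which I divide by $\ell^p$ and take $p$-th roots. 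Equivalently, the claimed bound $\EE[S^p]\le (e^{3/2}L)^p(\sqrt{p\ell}+p)^p$ reduces to showing, for each $m$,
$$
\frac{1}{m!}\sum_{q}\frac{p!}{\prod q_j!}\prod q_j^{q_j}\le e^{3p/2}\binom{p}{2m}p^{-m}.
$$

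The main obstacle is this last combinatorial inequality with the exact constant $e^{3/2}$. The crude bound $q^q/q!\le e^q$ already uses $e^p$ of the budget, leaving only $e^{p/2}$ to absorb the composition count and the ratio $p^m/(m!\binom{p}{2m})$. My first attempt will be a sharper estimate $q^q/q!\le e^{q}/\sqrt{2\pi q}$, so that the $\sqrt{q_j}$ factors pay for part of the count. I will then bound $\sum_q\prod_j q_j^{q_j}/q_j!$ by a generating-function argument: it is the coefficient of $t^p$ in $\big(\sum_{q\ge2}q^q t^q/q!\big)^m$, which I bound by evaluating at $t$ just below $1/e$.

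If the constant does not come out exactly, the fallback is an inductive (Rosenthal-type) argument. There I would write $\EE[S^p]=\EE[(S_{\ell-1}+X_\ell)^p]$ and expand binomially, dropping the linear term by centering, and induct on $\ell$, again using only even-moment bounds and the hypothesis for $q\le p$. In either route, the structural steps are routine: multinomial expansion, centering, and the $L_q$ hypothesis. The real effort lies in the constant bookkeeping.
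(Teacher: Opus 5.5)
Your structural steps are correct and follow the skeleton of the paper's proof: the multinomial expansion of $\EE[S^p]$, centering to kill every term with an exponent equal to $1$, the bound $|\EE X_i^q|\le (Lq)^q$, grouping by the number $m$ of active indices, and the reduction to dominating block $m$ by $e^{3p/2}$ times the even term $\binom{p}{2m}\ell^m p^{p-m}$ of $(\sqrt{p\ell}+p)^p$. However, there is a genuine gap. The proposal stops exactly where the proof has content: the block inequality is never proved, and the estimates you write down would not prove it.

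\emph{The reduced inequality is misstated.} After cancelling $\ell^m$, its right-hand side should be $e^{3p/2}\binom{p}{2m}p^{p-m}$, not $e^{3p/2}\binom{p}{2m}p^{-m}$. As written it is false already for $m=1$, where the left side equals $p^p$.

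\emph{The crude estimates fail.} Using $q^q/q!\le e^q$ together with a composition count $\le 2^p$ does not suffice. At $m=p/2$ the requirement becomes, up to polynomial factors, $(2^{3/2}e^{-1/2})^p\le e^{p/2}$, but $2^{3/2}e^{-1/2}\approx 1.72>e^{1/2}\approx 1.65$.

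\emph{The evaluation point for the generating function is wrong.} The bound $[t^p]g(t)^m\le t^{-p}g(t)^m$ with $g(t)=\sum_{q\ge2}q^qt^q/q!$ cannot be used with $t$ just below $1/e$. Since $q^q/q!\sim e^q/\sqrt{2\pi q}$, $g(t)\to\infty$ as $t\uparrow 1/e$. For $m$ near $p/2$, $t^{-p}g(t)^m$ then exceeds what the block inequality allows.

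The Rosenthal-type fallback is too vague to assess, in particular regarding the constant $e^{3/2}$.

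Your route can be completed, but the missing bookkeeping is the proof. For example, fix $et=1/2$ and use only $q^q/q!\le e^q$. Then $g(t)\le 1/2$, so block $m$ is at most $L^p\ell^m\frac{p!}{m!}(2e)^p2^{-m}$. The block inequality reduces to $\frac{(2m)!(p-2m)!}{m!}\,2^{p-m}\le e^{p/2}p^{p-m}$. By Stirling, the left side is at most $C\sqrt{p}\,e^{(\log 2-1/2)p}p^{p-m}$, which leaves a margin of roughly $e^{-0.3p}$; small $p$ must then be checked directly.

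The paper avoids per-block work altogether. It uses the sharper $q^q\le 2q!e^{q-2}$ to get $F_i(z)=1+\sum_{q=2}^p\EE|X_i|^qz^q/q!\preccurlyeq\exp\bigl(2L^2z^2/(1-eLz)\bigr)$. The product over $i$ then absorbs the sum over $m$ and the factors $\binom{\ell}{m}$ in one step. A single $\ell$-dependent choice $t=L^{-1}(e+2\sqrt{\ell/p})^{-1}$ makes the exponent at most $p/2$, which gives $\|S\|_{L_p}\le (p!)^{1/p}e^{1/2}t^{-1}\le e^{3/2}Lp+2\sqrt{e}\,L\sqrt{\ell p}$.
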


\begin{proof}
We first note that, for every integer $q\geq 2$,
$$
    q^q\leq 2q!e^{q-2}.
$$
Indeed, the sequence
$
    a_q:=\frac{2q!e^{q-2}}{q^q}
$
satisfies $a_2=1$ and
$
    \frac{a_{q+1}}{a_q}
    =
    e\big(\frac{q}{q+1}\big)^q
    =
    \frac{e}{(1+1/q)^q}
    >1.
$
Consequently,
\begin{equation}
    \mathbb{E}|X_i|^q
        \leq (Lq)^q
        \leq 2q!e^{q-2}L^q,
        \qquad q=2,\ldots,p. \label{eq:aux1}
\end{equation}

Set
$
    S:=X_1 + \cdots + X_\ell.
$
Because $p$ is even, $|S|^p=S^p$. The multinomial theorem and independence therefore give
$$ 
    \mathbb{E}|S|^p 
        = \mathbb{E}S^p 
        = \sum_{\alpha_1+\cdots+\alpha_\ell=p} \frac{p!}{\alpha_1!\cdots\alpha_\ell!} \prod_{i=1}^{\ell}\mathbb{E}[X_i^{\alpha_i}]. 
$$
Since $\mathbb{E}X_i=0$, every summand for which $\alpha_i=1$ for at least one index $i$ vanishes. Hence,
\begin{equation} \label{eq:aux22}
    \mathbb{E}|S|^p 
        = \sum_{\substack{\alpha_1+\cdots+\alpha_\ell=p\\ \alpha_i\neq 1\ \text{for all }i}} \frac{p!}{\alpha_1!\cdots\alpha_\ell!} \prod_{i=1}^{\ell}\mathbb{E}[X_i^{\alpha_i}]
        \leq p! \sum_{\substack{\alpha_1+\cdots+\alpha_\ell=p\\ \alpha_i\neq 1\ \text{for all }i}} \prod_{i=1}^{\ell} \frac{\mathbb{E}|X_i|^{\alpha_i}}{\alpha_i!}.  
\end{equation}
For each $i$, introduce the polynomial 
$$
    F_i(z) := 1+\sum_{q=2}^{p} \frac{\mathbb{E}|X_i|^q}{q!}z^q.
$$
Then the Cauchy product formula yields
\begin{equation}\label{eq:cauchy}
    \mathbb{E}|S|^p
        \leq p!\,[z^p]\prod_{i=1}^{\ell}F_i(z),
\end{equation}
where $[z^p]$ denotes the coefficient of $z^p$.

Let $\preccurlyeq$ denote coefficientwise comparison of power series.
By~\eqref{eq:aux1},
$$
    F_i(z)
        \preccurlyeq 1+\frac{2L^2z^2}{1-eLz}
        \preccurlyeq \exp\left( \frac{2L^2z^2}{1-eLz} \right).
$$
Hence
$$
    \prod_{i=1}^{\ell}F_i(z)
        \preccurlyeq \exp\left( \frac{2\ell L^2z^2}{1-eLz} \right).
$$
Since the latter power series has nonnegative coefficients, for every
$0<t<(eL)^{-1}$,
$$
    [z^p]\prod_{i=1}^{\ell}F_i(z)
    \leq t^{-p} \exp\left( \frac{2\ell L^2t^2}{1-eLt} \right).
$$
Together with~\eqref{eq:cauchy}, this gives
\begin{equation}
    \label{eq:aux3}
    \frac{\mathbb{E}|S|^p}{p!}
        \leq t^{-p} \exp\left( \frac{2\ell L^2t^2}{1-eLt} \right).
\end{equation}
Now, choose
$
    t := L^{-1}\bigl(e+2\sqrt{\ell/p}\bigr)^{-1}. 
$
Then
$$
    \frac{2\ell L^2t^2}{1-eLt}
        = \frac{\sqrt{\ell p}}{e+2\sqrt{\ell/p}}
        \leq \frac{p}{2}.
$$
Taking $p$th roots in~\eqref{eq:aux3} yields
$$
    \|S\|_{L_p}
    \leq (p!)^{1/p}e^{1/2}t^{-1} \\
    \leq e^{3/2}Lp + 2\sqrt{e}\,L\sqrt{\ell p}.
$$
Dividing by $\ell$ and using $2\sqrt{e}\leq e^{3/2}$ completes the proof.
\end{proof}

\subsection{Tail bounds for Girard--Hutchinson trace estimation}

We are now ready to state a tail bound for the Girard--Hutchinson trace estimation. 
Provided that the rank of the Gaussian random TT vectors is at least $d-1$, the number of samples required to achieve a fixed relative accuracy and failure probability is independent of the tensor order $d$.

\begin{theorem}
    \label{tm:hutch-4th-moment}
    For a symmetric positive semidefinite matrix $A \in \R^{n^d \times n^d}$, consider the Girard--Hutchinson trace estimator $H_\ell(A)$ defined in~\eqref{eq:GH} using $\ell$ independent random Gaussian TT vectors $\omega_i \in \R^{n^d}$ of rank $r \geq d-1$. For given $\eps > 0$ and $0 < \delta < 1$, it holds that 
    $$
        \PP{|H_\ell(A) - \trace(A)| > \eps \, \trace(A)} \leq \delta,
    $$
    provided that 
    $
        \ell \geq 22 \, \eps^{-2} \delta^{-1}.
    $
\end{theorem}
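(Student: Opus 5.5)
The plan is to apply Chebyshev's inequality to $H_\ell(A)$, which is unbiased by Lemma~\ref{lemma:isotropic}. Since the $\omega_i$ are i.i.d., $\Var(H_\ell(A)) = \ell^{-1}\Var(\omega^T A\omega)$ for a single Gaussian random TT vector $\omega$ of rank $r$. Chebyshev then gives
$$
\PP{|H_\ell(A)-\trace(A)|>\eps\,\trace(A)} \leq \frac{\Var(\omega^TA\omega)}{\ell\,\eps^2\,\trace(A)^2}.
$$
It therefore suffices to show $\Var(\omega^TA\omega)\leq 22\,\trace(A)^2$. The condition $\ell\geq 22\,\eps^{-2}\delta^{-1}$ then makes the right-hand side at most $\delta$.

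To bound the variance, I use the spectral decomposition $A=\sum_j \lambda_j u_ju_j^T$ with $\lambda_j\geq 0$ and $\|u_j\|_2=1$. This gives $\omega^TA\omega=\sum_j\lambda_j (u_j^T\omega)^2$. By Minkowski's inequality in $L_2$, together with the nonnegativity of the $\lambda_j$,
$$
\EE[(\omega^TA\omega)^2]^{1/2}\leq \sum_j \lambda_j\, \EE[(u_j^T\omega)^4]^{1/2}.
$$
Theorem~\ref{tm:moment} with $p=2$ bounds each fourth moment by $3(1+2/r)^{d-1}\leq 3e^{2(d-1)/r}$. Under the assumption $r\geq d-1$, this is at most $3e^2$. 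Summing over $j$ gives $\EE[(\omega^TA\omega)^2]\leq 3e^2\,\trace(A)^2$.

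Subtracting the squared mean $\trace(A)^2$ yields
$$
\Var(\omega^TA\omega)\leq (3e^2-1)\,\trace(A)^2\approx 21.17\,\trace(A)^2\leq 22\,\trace(A)^2,
$$
which completes the argument. No step is a real obstacle. The one point to check is that the constant works out: the Minkowski step must be applied to the $L_2$ norm of $\omega^TA\omega$ itself, not to the variance. The latter would require handling cross-covariances between the $(u_j^T\omega)^2$, which are not independent for structured $\omega$. With the Minkowski step, $3e^2-1<22$ gives exactly the constant in the statement.
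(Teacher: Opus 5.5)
Your proof is correct and is essentially the paper's argument. The paper writes $A=BB^T$, expands $\EE[\|B^T\omega\|^4]$ as a double sum and bounds each cross term by Cauchy--Schwarz, which is exactly the squared form of your $L_2$ Minkowski step with columns $b_j=\sqrt{\lambda_j}\,u_j$; from there both arguments apply Theorem~\ref{tm:moment} with $p=2$ and $(1+2/r)^{d-1}\le e^2$ to get $\Var[\omega^TA\omega]\le(3e^2-1)\trace(A)^2\le 22\,\trace(A)^2$, and finish with Chebyshev.
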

\begin{proof}
    Let $A = B B^T$ for a square matrix $B$, and let $\omega \sim \omega_i$.  Then
    \begin{align*}    
        \Var[\omega^T A \omega]
            &= \EE[ (\omega^T A \omega - \trace(A))^2] \\
            &= \EE[ (\|B^T \omega\|^2 - \|B\|_F^2)^2] \\
            &= \EE[ \|B^T \omega\|^4 ] - 2 \|B\|_F^2 \cdot \EE[\|B^T \omega\|^2] + \|B\|_F^4.
    \end{align*}
    Let $B = [b_1 \; b_2 \; \ldots \; b_N]$, where $b_i$ are the columns of $B$, and $N=n^d$. Since $\omega$ is isotropic,
    \begin{align*}
        \EE[\|B^T \omega\|^2]
            &= \EE\Big[ \sum_{i=1}^N (b_i^T \omega)^2 \Big] 
            = \sum_{i=1}^N \EE[ (b_i^T \omega)^2 ] 
            = \sum_{i=1}^N \EE[ b_i^T \omega \omega^T b_i ] 
            = \sum_{i=1}^N b_i^T \EE[\omega \omega^T] b_i 
            = \sum_{i=1}^N \|b_i\|^2 
            = \|B\|_F^2.
    \end{align*} 
    For the fourth moment, we first apply the Cauchy--Schwarz inequality for expectation, and then use Theorem \ref{tm:moment} with $p=2$ for each (normalized) column of $B$:
    \begin{align*}
        \EE[ \|B^T \omega\|^4 ]
            &= \EE\Big[ ( \sum_{i=1}^N (b_i^T \omega)^2 )^2 \Big] 
            = \sum_{i, j=1}^N \EE[ (b_i^T \omega)^2 (b_j^T \omega)^2 ] 
            \leq \sum_{i, j=1}^N \EE[ (b_i^T \omega)^4]^{1/2} \EE[(b_j^T \omega)^4 ]^{1/2} \\
            &\leq 3 (1 + \frac{2}{r})^{d-1} \sum_{i, j=1}^N \|b_i\|^2 \|b_j\|^2 
            = 3 (1 + \frac{2}{r})^{d-1} \Big(\sum_{i=1}^N \|b_i\|^2\Big)^2 
            = 3 (1 + \frac{2}{r})^{d-1} \|B\|_F^4.
    \end{align*}
    Therefore, using $\|B\|_F^2 = \trace(BB^T) = \trace(A)$,
    \begin{align*}    
        \Var[\omega^T A \omega]
            &= \EE[ \|B^T \omega\|^4 ] - 2 \|B\|_F^2 \cdot \EE[\|B^T \omega\|^2] + \|B\|_F^4 \\
            &\leq \Big(3 \Big(1 + \frac{2}{r}\Big)^{d-1} - 1 \Big) \|B\|_F^4 
            = \Big(3 \Big(1 + \frac{2}{r}\Big)^{d-1} - 1 \Big) (\trace(A))^2.
    \end{align*}
    With $r \geq d-1$, we have $(1 + \frac{2}{r})^{d-1} \leq e^2$, and
    $
        \Var[\omega^T A \omega]
            \leq (3 e^2 - 1 ) (\trace(A))^2 \leq 22 (\trace(A))^2
    $.
    Since $\Var[H_\ell] = \frac{1}{\ell} \Var[\omega^T A \omega]$, we apply the Chebyshev inequality to arrive at
    \begin{align*}
        \PP{|H_\ell - \trace(A)| > \eps \, \trace(A)}
            &\leq \frac{\Var[H_\ell]}{(\eps \, \trace(A))^2}
            \leq 22 \, \ell^{-1} \eps^{-2} \leq \delta
    \end{align*}
    for $\ell \geq 22 \, \eps^{-2} \delta^{-1}$.
\end{proof}

The order-independent sample complexity obtained in Theorem~\ref{tm:hutch-4th-moment} for Gaussian random TT vectors of rank at least $d-1$ stands in stark contrast to that of Khatri--Rao random vectors, equivalently Gaussian random TT vectors of rank $1$, for which the worst-case sample complexity grows exponentially with $d$~\cite{AvronMeyer25,CamanoEpperlyMeyerTropp25}.

The $\redvel(\eps^{-2} \delta^{-1})$ sample complexity established by Theorem~\ref{tm:hutch-4th-moment} has a worse dependence on the failure probability than the $\redvel(\eps^{-2} \log (1/\delta))$ complexity obtained with unstructured Gaussian random vectors (see, e.g., \cite{CortinovisKressner22}). The standard median-of-means amplification technique recovers the logarithmic dependence on the failure probability.
Assume that we compute $k$ independent Girard--Hutchinson estimators $H_\ell^{(1)}, \ldots, H_\ell^{(k)}$, where each one uses $\ell = \lceil 88 \, \eps^{-2} \rceil$ samples---sufficient to have
$$
    \PP{|H_\ell^{(i)} - \trace(A)| > \eps \, \trace(A)} =: q \leq \frac{1}{4}, \quad \text{for all } i=1, \ldots, k.
$$
Let $H_{k, \ell}$ denote the median of $H_\ell^{(1)}, \ldots, H_\ell^{(k)}$. Then the event $|H_{k, \ell} - \trace(A)| > \eps \, \trace(A)$ happens only if at least $k/2$ out of $k$ estimators $H_\ell^{(i)}$ satisfy $|H_\ell^{(i)} - \trace(A)| > \eps \, \trace(A)$. In other words,
$$
    \PP{|H_{k, \ell} - \trace(A)| > \eps \, \trace(A)}
        \leq \PP{S_{k, \ell} \geq k/2},
$$
where $S_{k, \ell} = \sum_{i=1}^k S_\ell^{(i)}$, and $S_\ell^{(i)}$ is the indicator of the event $|H_\ell^{(i)} - \trace(A)| > \eps \, \trace(A)$. Since $S_\ell^{(i)} \sim \text{Bernoulli}(q)$, one can estimate the tail probability of $S_{k, \ell}$ using Chernoff:
\begin{align*}    
    \PP{S_{k, \ell} \geq k/2}
        &\leq \inf_{t > 0} \EE[e^{t S_{k, \ell}}] e^{-tk/2}
        = \inf_{t > 0} \prod_{i=1}^k \EE[e^{t S_{\ell}^{(i)}}] e^{-tk/2}
        = \inf_{t > 0} (1-q+q e^t)^k e^{-tk/2} \\
        &= \inf_{t > 0} \left( \frac{(1-q+q e^t)^2}{e^t}\right)^{k/2}.
\end{align*}
Choosing $t = \log\frac{1-q}{q}$ and $q \leq 1/4$, we arrive at
$$
    \PP{|H_{k, \ell} - \trace(A)| > \eps \, \trace(A)}
        \leq \PP{S_{k, \ell} \geq k/2}
        \leq (4q(1-q))^{k/2}
        \leq (3/4)^{k/2} 
        \leq \delta,
$$
for $k \geq \frac{2}{\log(4/3)} \log(1/\delta)$, i.e., $k \geq 7 \log(1/\delta)$ suffices. Combined with the number of samples needed for each $H_\ell^{(i)}$, we arrive at the following result.

\begin{corollary}
    \label{tm:median-of-means}
    Considering the setting of Theorem~\ref{tm:hutch-4th-moment}, 
    let $H_{k, \ell}$ denote the median of $k \geq 7 \log(1/\delta)$ independent Girard--Hutchinson trace estimates for the trace of $A$, each using $\ell \geq 88 \, \eps^{-2}$ independent Gaussian random TT vectors of rank $r \geq d-1$. Then
    \begin{equation}
        \label{eq:median-of-means-eps-delta}    
        \PP{|H_{k, \ell} - \trace(A)| > \eps \, \trace(A)} \leq \delta
    \end{equation}
    In particular, this shows that a total of $\redvel( \eps^{-2} \log(1/\delta))$ samples of Gaussian random TT vectors of rank $r \geq d-1$ suffices to ensure \eqref{eq:median-of-means-eps-delta}.
\end{corollary}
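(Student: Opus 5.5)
The corollary follows by putting together the two ingredients already set up before its statement. First, we apply Theorem~\ref{tm:hutch-4th-moment} with failure probability $1/4$ in place of $\delta$. Each of the $k$ independent estimators $H_\ell^{(i)}$ uses $\ell \geq 88\,\eps^{-2} = 22\,\eps^{-2}\cdot 4$ Gaussian random TT vectors of rank $r \geq d-1$. Hence the Chebyshev bound in that theorem gives
$$
q := \PP{|H_\ell^{(i)} - \trace(A)| > \eps\,\trace(A)} \leq \tfrac14 .
$$
This bound holds uniformly in $i$, because the estimators are identically distributed. Independence of the $H_\ell^{(i)}$ follows from using disjoint, independent batches of TT vectors.

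Second, we reduce the event for the median to a binomial tail. If $|H_{k,\ell}-\trace(A)| > \eps\,\trace(A)$, then at least $k/2$ of the individual estimates lie outside the interval $[(1-\eps)\trace(A),(1+\eps)\trace(A)]$. Indeed, if more than half of them lay inside this interval, the median would lie inside as well. So the failure probability is at most $\PP{S_{k,\ell}\geq k/2}$, where $S_{k,\ell}$ is a sum of $k$ independent Bernoulli$(q)$ indicators.

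We then bound this tail by the Chernoff argument carried out above. Optimizing over $t$ with $t=\log\frac{1-q}{q}$ gives the bound $(4q(1-q))^{k/2}$. The function $q\mapsto 4q(1-q)$ is increasing on $[0,1/2]$, so for $q\le 1/4$ this is at most $(3/4)^{k/2}$.

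It remains to verify the constant. We need $(3/4)^{k/2}\leq\delta$, which is equivalent to $k \geq 2\log(1/\delta)/\log(4/3)$. Since $2/\log(4/3)\approx 6.95 < 7$, the condition $k\geq 7\log(1/\delta)$ suffices.

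For the total sample count, note that it equals $k\ell$. Choosing $k=\lceil 7\log(1/\delta)\rceil$ and $\ell=\lceil 88\,\eps^{-2}\rceil$ gives $k\ell=\redvel(\eps^{-2}\log(1/\delta))$.

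The only mildly delicate points are the following:
\begin{itemize}
\item the median-to-binomial reduction, for which we should use the convention that the median is any value between the two middle order statistics when $k$ is even; the inclusion of events still holds in that case;
\item the case where $t=\log\frac{1-q}{q}$ is not strictly positive, or where $q=0$; in that case the bound holds trivially, either by a limiting argument or directly, since then $S_{k,\ell}=0$ almost surely.
\end{itemize}
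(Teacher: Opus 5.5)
Your proposal is correct and follows essentially the same route as the paper. Both arguments apply Theorem~\ref{tm:hutch-4th-moment} with failure probability $1/4$ (via $\ell \geq 88\,\eps^{-2}$) to get $q \leq 1/4$, reduce failure of the median to the binomial tail $\PP{S_{k,\ell}\geq k/2}$, and apply Chernoff with $t=\log\frac{1-q}{q}$ to obtain $(4q(1-q))^{k/2}\leq (3/4)^{k/2}\leq\delta$ for $k\geq 7\log(1/\delta)$. Your remarks on the even-$k$ median convention and the degenerate case $q=0$ are valid refinements that the paper leaves implicit.
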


We next show that by increasing the rank to $r = \redvel(d \log(1/\delta))$, one can ensure~\eqref{eq:median-of-means-eps-delta} with $\redvel(\eps^{-2} \log(1/\delta))$ samples without resorting to the median-of-means approach, i.e., by using only the standard Girard--Hutchinson estimator.

\begin{theorem}
    \label{tm:tt-hutch-as-gaussian}
        Consider the setting of Theorem~\ref{tm:hutch-4th-moment}, and assume that the rank of the random Gaussian TT vectors is set to $r=d p_\ast$ with
        $p_\ast = 2 \lceil \frac12\log  \frac{1}{\delta}\rceil$. Given $0<\eps \le 1$ and $0 < \delta < 1$, it holds that
    $$
        \PP{|H_\ell(A) - \trace(A)| > \eps\, \trace(A)} \leq \delta,
    $$
    provided that $\ell \geq 16 e^7 \eps^{-2} p_\ast$.
\end{theorem}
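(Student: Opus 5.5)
Set $X_i := \omega_i^T A \omega_i - \trace(A)$, so that $H_\ell(A) - \trace(A) = \frac{1}{\ell}\sum_{i=1}^\ell X_i$ with independent centered $X_i$. The plan is to bound $\|X_i\|_{L_q}$ for $q = 2,\ldots,p_\ast$ using Theorem~\ref{tm:moment}, apply Proposition~\ref{prop:Lp-mean-bound} with $p = p_\ast$ (an even integer by construction), and finish with Markov's inequality applied to the $p_\ast$-th moment.

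\textbf{Step 1: moment bounds.} Write $A = BB^T$ with columns $b_j$, so that $\omega^T A\omega = \sum_j (b_j^T\omega)^2$. By Minkowski's inequality in $L_q$ and Theorem~\ref{tm:moment} applied to each $b_j/\|b_j\|$,
$$
\|\omega^T A \omega\|_{L_q} \le \sum_j \|(b_j^T\omega)^2\|_{L_q} = \sum_j \|b_j\|^2\, \EE[(\hat b_j^T\omega)^{2q}]^{1/q} \le \big((2q-1)!!\big)^{1/q} e^{q(d-1)/r}\, \trace(A).
$$
With $r = d p_\ast$ and $q \le p_\ast$ we get $q(d-1)/r \le 1$. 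Combined with $((2q-1)!!)^{1/q} \le 2q$, this gives $\|\omega^TA\omega\|_{L_q} \le 2e\,q\,\trace(A)$. Centering costs at most a factor $2$, since $\|X\|_{L_q} \le \|\omega^TA\omega\|_{L_q} + \trace(A)$ and $\trace(A) \le \|\omega^TA\omega\|_{L_q}$ by Jensen. Hence $\|X_i\|_{L_q} \le L q$ with $L = 4e\,\trace(A)$, or some similar constant.

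\textbf{Step 2: average and tail.} Proposition~\ref{prop:Lp-mean-bound} with $p = p_\ast$ gives
$$
\Big\|\tfrac1\ell\sum_i X_i\Big\|_{L_{p_\ast}} \le e^{3/2} L\Big(\sqrt{p_\ast/\ell} + p_\ast/\ell\Big).
$$
For $\ell \ge p_\ast$ the second term is at most the first, so the right-hand side is at most $2e^{3/2}L\sqrt{p_\ast/\ell}$. Markov's inequality then yields
$$
\PP{|H_\ell - \trace(A)| > \eps\,\trace(A)} \le \left(\frac{2e^{3/2}L\sqrt{p_\ast/\ell}}{\eps\,\trace(A)}\right)^{p_\ast}.
$$
If the base is at most $e^{-1}$, the probability is at most $e^{-p_\ast} \le \delta$, because $p_\ast \ge \log(1/\delta)$. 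Requiring base $\le e^{-1}$ amounts to $\ell \ge 4e^{5}(L/\trace(A))^2\, p_\ast\, \eps^{-2}$. The condition $\ell \ge p_\ast$ holds automatically since $\eps \le 1$.

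\textbf{Main obstacle.} The difficult part is the constant bookkeeping needed to land exactly on $16e^7$. With $L = 2e\cdot 2\,\trace(A)$ the bound above becomes $64e^7$, so I would tighten Step 1 to reach $L = 2e\,\trace(A)$. One route is to bound $\|X\|_{L_q} \le \|\omega^TA\omega\|_{L_q}$ more carefully, using that $\omega^TA\omega \ge 0$: for $Y \ge 0$ with mean $m$, one has $|Y - m| \le \max(Y, m)$. The other is to use $((2q-1)!!)^{1/q} \le q$-type estimates for $q\ge2$. Either should give $L \approx 2e\,\trace(A)$, and then $4e^5 \cdot 4e^2 = 16e^7$.
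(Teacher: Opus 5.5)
Your proposal is correct and follows essentially the paper's route: a per-sample $L_q$ bound from Theorem~\ref{tm:moment} via convexity, then Proposition~\ref{prop:Lp-mean-bound} with $p=p_\ast$, then Markov's inequality. Your Minkowski step over the columns of $B$ is equivalent to the paper's Jensen step over the spectral decomposition, and requiring the base to be at most $e^{-1}$ is the same as the paper's use of $\delta^{-2/p_\ast}\le e^2$. The paper lands on $16e^7$ through your second fix, $(2q-1)!!\le q^q$ (AM--GM), which gives $\|\omega^TA\omega\|_{L_q}\le e\,q\,\trace(A)$ and hence $L=2e\,\trace(A)$. Your first fix would not get you there by itself: $|Y-m|\le\max(Y,m)$ only reduces the centering factor from $2$ to $2^{1/q}\le\sqrt2$, which yields $32e^7$.
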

\begin{proof}
    Let $N = n^d$, let $\omega$ be a Gaussian random TT vector of rank $r$, and consider a spectral decomposition $A = \sum_{i=1}^N \lam_i u_i u_i^T$. Then
    $$
        \omega^T A \omega 
            = \trace(A) \sum_{i=1}^N \frac{\lam_i}{\trace(A)} (u_i^T \omega)^2.
    $$
    Since $t \to t^p$ is convex for $p \geq 1$, applying Jensen's inequality and Theorem \ref{tm:moment} gives
    \begin{equation}
        \label{eq:tm-tt-hutch-bound-pth-moment}
        \EE[(\omega^T A \omega)^p]
            \leq \trace(A)^p \sum_{i=1}^N \frac{\lam_i}{\trace(A)} \EE[(u_i^T \omega)^{2p}]
            \leq \trace(A)^p (2p-1)!! e^{\frac{p^2(d-1)}{r}}.
    \end{equation}
    Setting $p_\ast = 2 \lceil  \frac12 \log \frac{1}{\delta} \rceil$ and $r = dp_\ast$, it holds that $\frac{p^2(d-1)}{r} \leq p$ for all $1 \leq p \leq p_\ast$ and 
    $
        \|\omega^T A \omega\|_{L_p} \leq e p \cdot \trace(A)
    $
    since $(2p-1)!! \leq p^p$.
    After centering, we have
    $$
        \|\omega^T A \omega - \trace(A)\|_{L_p} \leq 2e p \cdot \trace(A).
    $$
    Therefore, for iid instances $\omega_i \sim \omega$, Proposition \ref{prop:Lp-mean-bound} gives 
    \begin{align*}
        \|H_\ell - \trace(A)\|_{L_{p_\ast}} 
            = \Big\|\frac{1}{\ell} \sum_{i=1}^\ell (\omega_i^T A \omega_i - \trace(A))\Big\|_{L_p} 
            \leq 2e^{5/2} \cdot \trace(A) \left( \sqrt{\frac{p_\ast}{\ell}} + \frac{p_\ast}{\ell} \right) \leq 4e^{5/2} \cdot \trace(A) \sqrt{\frac{p_\ast}{\ell}},
    \end{align*}
    where we used $\ell \geq p_\ast$.
    Using Markov's inequality,
    \begin{align*}
        \PP{|H_\ell - \trace(A)| > \eps\, \trace(A)}
            \leq \left( \frac{4e^{5/2} \cdot \trace(A) \sqrt{\frac{p_\ast}{\ell}}}{\eps \trace(A)} \right)^{p_\ast}
    \end{align*}
    which is at most  $\delta$ when
    $$
        \ell 
            \geq 16 e^{5} \cdot \eps^{-2} \cdot p_\ast \cdot \delta^{-2/p_\ast}.
    $$
    Since $\delta^{-2/p_\ast} \leq e^2$, the theorem statement holds with $\ell \geq 16 e^7 \eps^{-2} p_\ast$.
\end{proof}

\begin{remark}
When $r=\mathcal O(d)$, the preceding moment argument no longer yields the logarithmic dependence on $\delta^{-1}$ obtained in Theorem~\ref{tm:tt-hutch-as-gaussian}. Assuming $r = \lceil \alpha^{-1} d \rceil$ for some constant $0 < \alpha \le 1$, by using equation \eqref{eq:tm-tt-hutch-bound-pth-moment} we arrive at
    \begin{equation}
        \label{eq:norm-log-normal}
        \|\omega^T A \omega\|_{L_p} \leq p e^{\alpha p} \, \trace(A).
    \end{equation}
    This bound exhibits log-normal-type growth in the moment order $p$ and is indicative of a heavier-than-exponential tail.

For a rank-one matrix $A=uu^T$ with a unit-norm vector $u = u_1 \kron u_2 \kron \ldots \kron u_d$,
Theorem~\eqref{tm:moment} gives an exact moment formula showing that the upper bound~\eqref{eq:norm-log-normal} has the correct qualitative moderate-moment growth.
More specifically, we have 
    $$
        \EE[(\omega^T A \omega)^{p}] 
            = \EE[(u^T \omega)^{2p}]
            = (2p-1)!! \left((1 + \frac{2}{r})(1 + \frac{4}{r})\ldots(1 + \frac{2(p-1)}{r}) \right)^{d-1}.
    $$
    Note that $(p/e)^p \leq (2p-1)!! \leq p^p$, and, since $\frac{x}{2} \leq \log(1+x) \leq x$ for $x \in [0, 1]$, we have
    $$
        \log\prod_{i=1}^{p-1} (1 + \frac{2i}{r})
            = \sum_{i=1}^{p-1} \log(1 + \frac{2i}{r})
            \geq \sum_{i=1}^{p-1} \frac{i}{r}
            = \frac{p(p-1)}{2r}
            \geq \frac{p^2}{8r},
            \quad \text{for } 2 \leq p \leq \frac{r}{2}.
    $$
    Therefore, assuming for simplicity that $r = \alpha^{-1} d$ is an integer, and assuming $d\ge 2$,
    $$
        \frac{p}{e} e^{\alpha p/16} 
        \leq \frac{p}{e}e^{\frac{p(d-1)}{8r}} \leq \|\omega^T A \omega\|_{L_p} \leq p e^{\frac{p(d-1)}{r}} \leq p e^{\alpha p}.
    $$
    This shows that, for moderate moments $2\le p\le r/2$, $\|\omega^T A \omega\|_{L_p}$
has log-normal-type growth up to polynomial factors.
    \end{remark}

\section{Oblivious subspace injections and the Nystr\"{o}m++ algorithm}
\label{sec:nystrom}

The number of samples required for the Girard--Hutchinson trace estimator scales with $\eps^{-2}$, which is not satisfactory when the required relative accuracy is high. Using a variance reduction technique such as in the Hutch++ algorithm \cite{MeyerMuscoMusco21} improves the sample complexity to $\redvel(\eps^{-1})$. When using Gaussian random matrices $\Omega$, $\Psi$ in Algorithm \ref{alg:hutchPP}, one can choose $\ell_1, \ell_2 = \redvel(\eps^{-1} \log(1/\delta))$ to ensure $\PP{|\trace^{\text{H++}}_{\ell_1, \ell_2} - \trace(A)| > \eps \trace(A)} < \delta$. When the columns of the matrices $\Omega$, $\Psi$ are composed of random TT vectors of low rank, the orthogonalization step needed by the randomized SVD in Hutch++ may lead to significant rank increase and, in turn, potentially increasing the computational cost dramatically. 
\begin{algorithm}
    \caption{Hutch++ \cite[Algorithm 1]{MeyerMuscoMusco21}}
    \label{alg:hutchPP}
    \begin{algorithmic}
        \State \textbf{Input:} Symmetric positive semidefinite matrix $A \in \R^{N \times N}$; number of samples $\ell_1, \ell_2$.
        \State \textbf{Output:} Approximation of $\trace(A)$.
        \State Sample independent random matrices $\Omega \in \R^{N \times \ell_1}$ and $\Psi \in \R^{N \times \ell_2}$.
        \State Compute an orthonormal basis $Q$ for the span of $A\Omega$.
        \State \textbf{Return} $\trace^{\text{H++}}_{\ell_1, \ell_2}(A) := \trace(Q^TAQ) + \frac{1}{\ell_2}\trace(\Psi^T(I-QQ^T)A(I-QQ^T)\Psi)$.
    \end{algorithmic}
\end{algorithm}

\begin{algorithm}
    \caption{Nystr\"{o}m++ \cite[Algorithm 5]{PerssonCortinovisKressner22}}
    \label{alg:nystromPP}
    \begin{algorithmic}
        \State \textbf{Input:} Symmetric positive semidefinite matrix $A \in \R^{N \times N}$; number of samples $\ell_1$, $\ell_2$.
        \State \textbf{Output:} Approximation of $\trace(A)$.
        \State Sample random matrices $\Omega \in \R^{N \times \ell_1}$ and $\Psi \in \R^{N \times \ell_2}$.
        \State Compute $[X,\, Y] = A[\Omega,\, \Psi]$.
        \State \textbf{Return} $\trace^{\text{N++}}_{\ell_1, \ell_2}(A) := \trace((\Omega^T X)^\pinv (X^T X)) + \frac{1}{\ell_2}( \trace(\Psi^T Y) - \trace(\Psi^T X (\Omega^T X)^\pinv (X^T \Psi))$.
    \end{algorithmic}
\end{algorithm}

When replacing the randomized SVD by the Nystr\"{o}m approximation, this orthogonalization step is avoided, leading to variants of the Hutch++ algorithm such as NA-Hutch++ \cite{MeyerMuscoMusco21} and Nystr\"{o}m++ \cite{PerssonCortinovisKressner22}.
This allows for efficient computation in combination with random TT vectors, similarly as with random Khatri--Rao vectors \cite[Section 6.2]{CamanoEpperlyMeyerTropp25}.
In particular, Algorithm~\ref{alg:nystromPP} can be implemented efficiently when $A$ has a structure compatible with TT vectors, such as a Kronecker product or matrix product operator structure.
Under some mild spectral decay, we will prove that the sample complexity of random TT vectors in this task does not depend on the tensor order, requiring only $\ell_1, \ell_2 = \redvel(\eps^{-1})$ samples in Algorithm \ref{alg:nystromPP}. This is unlike the Khatri--Rao case, where the sample size may need to be exponential in tensor order to reach the same accuracy.

\paragraph{Oblivious subspace injections and random TT vectors.}

Our analysis of Nystr\"{o}m++ for random TT vectors will rely on the 
oblivious subspace injection (OSI) property introduced in \cite{CamanoEpperlyMeyerTropp25,Tropp25}.

\begin{definition}
    Let $0 < \eps, \delta < 1$, and $k \in \mathbb{N}$. A random matrix $\Omega \in \R^{N \times \ell}$ is a $(1-\eps, \delta, k)$-oblivious subspace injection (OSI) if
$\EE\big[\Omega \Omega^T] = I_N$, and the property
    \begin{equation}
        \label{eq:osi}
        (1-\eps) \|u\|^2 \leq \|\Omega^T u\|_2^2, \quad \forall u \in \mathcal{U},
    \end{equation}
    holds for every fixed $k$-dimensional subspace $\mathcal{U}$ of $\R^N$ with probability at least $1-\delta$.
\end{definition}

The following theorem is essential in proving that a random matrix is an OSI.

\begin{theorem}[{\cite[Theorem 5.2]{Tropp25}}]
    \label{tm:tropp-mineig}
    Let $\tilde{\omega} \in \R^k$ be a centered random vector with identity covariance matrix such that
    $$
        \EE[(\tilde{u}^T \tilde{\omega})^4] \leq L, \quad \text{ for all } \tilde{u} \in \R^k \text{ such that } \|\tilde{u}\|=1.
    $$
    Consider $Y = \frac{1}{\ell} \sum_{i=1}^{\ell} \tilde{\omega}_i \tilde{\omega}_i^T$ with independent copies $\tilde{\omega}_i$ of $\tilde{\omega}$ and $0 < \eps,\delta < 1$.
    Then
    $
        \ell \geq 24 L \eps^{-2} \max\{k, \log(2k/\delta)\}
    $
    samples suffice to ensure for the smallest eigenvalue $\lam_{\min}(Y)$ of $Y$ that
    $$
        \PP{\lam_{\min}(Y) < 1-\eps} \leq \delta.
    $$
\end{theorem}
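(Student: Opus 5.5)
Since the statement is quoted from \cite[Theorem 5.2]{Tropp25}, we could simply invoke it. A self-contained proof would follow the PAC-Bayesian approach to lower tails of sample covariance matrices (Oliveira; Mourtada). It uses only nonnegativity of $X_i:=\tilde\omega_i\tilde\omega_i^T$ and the fourth-moment hypothesis. We take that hypothesis as saying $\EE[(\tilde u^T\tilde\omega)^4]\le L\|\tilde u\|^4$ for all $\tilde u$; note $L\ge 1$ by Jensen and isotropy. It suffices to show $\tilde u^TY\tilde u\ge 1-\eps$ simultaneously for all unit $\tilde u$.

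\emph{Step 1 (fixed direction).} For a fixed unit $\tilde u$ and $\theta>0$, apply the scalar inequality $e^{-x}\le 1-x+x^2/2$ for $x\ge 0$ to $x=\theta(\tilde u^T\tilde\omega)^2$. Together with $\EE(\tilde u^T\tilde\omega)^2=1$, $\EE(\tilde u^T\tilde\omega)^4\le L$ and $1+y\le e^{y}$, this gives
$\EE e^{-\theta(\tilde u^T\tilde\omega)^2}\le \exp(-\theta+\theta^2L/2)$.
By independence,
$\EE\exp(-\theta\ell\,\tilde u^TY\tilde u)\le\exp(-\ell\theta+\ell\theta^2L/2)$.
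Taking $\theta=\eps/L$ and applying Markov's inequality yields $\PP{\tilde u^TY\tilde u<1-\eps}\le e^{-\ell\eps^2/(2L)}$. This already has the right $L\eps^{-2}\log(1/\delta)$ form, but only for a single direction.

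\emph{Step 2 (uniformity via PAC-Bayes).} To pass from one direction to all of them, smooth over directions. For a unit $v$, let $\rho_v=\normal(v,\sigma^2 I_k)$, and take the prior $\pi=\normal(0,\sigma^2 I_k)$, so that $\mathrm{KL}(\rho_v\|\pi)=1/(2\sigma^2)$. The Donsker--Varadhan inequality, applied to the exponential moment from Step 1 integrated against $\pi$, gives the following with probability at least $1-\delta$, simultaneously for all $v$:
$$\theta\ell\,\EE_{\tilde u\sim\rho_v}\big[\tilde u^TY\tilde u\big]\;\ge\; \ell\,\EE_{\rho_v}\Big[\theta\|\tilde u\|^2-\tfrac{\theta^2L}{2}\|\tilde u\|^4\Big]-\tfrac{1}{2\sigma^2}-\log(1/\delta).$$
The Gaussian averages are explicit. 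On the left, $\EE_{\rho_v}[\tilde u^TY\tilde u]=v^TYv+\sigma^2\trace(Y)$. On the right, $\EE_{\rho_v}\|\tilde u\|^2=1+k\sigma^2$, and $\EE_{\rho_v}\|\tilde u\|^4=(1+k\sigma^2)^2+O(k\sigma^4)+2\sigma^2$.

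\emph{Step 3 (parameter choice and the main obstacle).} We choose $\sigma^2\asymp\eps/k$, so that $k\sigma^2\lesssim\eps$ and the KL cost is $\asymp k/\eps$, and $\theta\asymp\eps/L$. Then the cost term is of order $(k/\eps+\log(1/\delta))/(\ell\theta)\asymp L\eps^{-2}(k+\log(1/\delta))/\ell$, which is small once $\ell\gtrsim L\eps^{-2}\max\{k,\log(1/\delta)\}$.

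The expected main obstacle is the smoothing term $\sigma^2\trace(Y)$ on the left-hand side. Its mean is $\sigma^2k\lesssim\eps$, but we need it bounded from above with high probability, so that it cannot mask a small value of $v^TYv$. I would control it with a separate upper-tail bound on the scalar $\trace(Y)=\frac1\ell\sum_i\|\tilde\omega_i\|^2$, using only its second moment: by Minkowski, $\EE\|\tilde\omega\|^4\le k^2L$. Chebyshev or a one-sided Bernstein-type bound then shows $\trace(Y)\le k(1+\eps)$ with probability at least $1-\delta/2$ in the stated sample regime. If that is too lossy, I would fall back on a truncation of $\|\tilde\omega_i\|^2$ at level $\asymp k$, or on Tropp's matrix-Laplace route with intrinsic dimension $k$; the latter naturally produces the $\log(2k/\delta)$ term in the stated bound. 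Finally, a union bound over the two events and absorbing absolute constants gives the factor $24$.
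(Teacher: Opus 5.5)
The paper gives no proof of this statement: it imports it verbatim from Tropp. The real question is therefore whether your self-contained PAC-Bayes route closes. As written it does not, for two linked reasons.

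\textbf{The scaling in Step 3 is off by a factor $\eps^{-1}$.} After dividing the Donsker--Varadhan inequality by $\theta\ell$, the KL term contributes $1/(2\sigma^2\theta\ell)$. With $\sigma^2\asymp\eps/k$ and $\theta\asymp\eps/L$ this is $\asymp Lk/(\eps^2\ell)$. Your own estimate $L\eps^{-2}(k+\log(1/\delta))/\ell$ is therefore of order one at $\ell\asymp L\eps^{-2}\max\{k,\log(1/\delta)\}$. To conclude $v^TYv\ge 1-\eps$ it has to be $O(\eps)$, so your setup actually needs $\ell\gtrsim Lk\eps^{-3}$. (The $\log(1/\delta)/(\theta\ell)$ part is fine; only the KL part fails.) Reaching $\eps^{-2}$ requires a constant-order smoothing radius, $\sigma^2\asymp 1/k$ (KL $=k/2$). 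Then $\sigma^2\trace(Y)$ on the left must cancel $k\sigma^2$ on the right up to $O(\eps)$, so you genuinely need $\trace(Y)\le k(1+O(\eps))$ with probability $1-\delta/2$.

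\textbf{That bound cannot come from second moments of $\|\tilde\omega\|^2$.} Chebyshev gives $\PP{\trace(Y)>k(1+\eps)}\le L/(\ell\eps^2)$, which is a $\delta^{-1}$ rather than $\log(1/\delta)$ dependence. There is no one-sided Bernstein bound for the upper tail of averages of unbounded nonnegative variables with only finite variance, and this is not a weakness of the tools. Take $\tilde\omega=\xi g$ with $g\sim\normal(0,I_k)$ and an independent scalar $\xi$, where $\EE\xi^2=1$, $\xi^2=2\ell$ with probability $\ell^{-2}/8$, and $\xi^2\approx 1$ otherwise. The hypotheses then hold with $L=6$, say. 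Yet $\PP{\trace(Y)\ge 2k}$ is of order $1/\ell$, far above $\delta$ in the target regime. A single huge sample inflates $\trace(Y)$ without lowering $\lambda_{\min}(Y)$, so naive Gaussian smoothing is simply lossy here.

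\textbf{The missing idea is truncation at the right level.} You mention truncation only as a fallback, and at level $\asymp k$; the level must be $Ck$ with $C\asymp L/\eps$, since an absolute $C$ is not enough. Set $\tilde\omega_i'=\tilde\omega_i\mathbf{1}\{\|\tilde\omega_i\|^2\le Ck\}$ and $Y'=\frac1\ell\sum_i\tilde\omega_i'\tilde\omega_i'^T$. Then:
\begin{itemize}
\item $Y'\preceq Y$, and fourth moments do not increase.
\item Cauchy--Schwarz and Markov with $\EE\|\tilde\omega\|^4\le k^2L$ give $\EE(\tilde u^T\tilde\omega')^2\ge(1-L/C)\|\tilde u\|^2$, so the variance loss is $O(\eps)$.
\item $\trace(Y')$ averages terms bounded by $Ck$ with variance $\le k^2L$. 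Bernstein then gives $\trace(Y')\le k(1+O(\eps))$ with probability $1-\delta/2$ once $\ell\gtrsim L\eps^{-2}\log(1/\delta)$.
\end{itemize}
Running your Steps 1--2 for $Y'$ with $\sigma^2=1/k$ and $\theta\asymp\eps/L$ yields $\lambda_{\min}(Y)\ge\lambda_{\min}(Y')\ge 1-\eps$ for $\ell\ge C_0L\eps^{-2}(k+\log(1/\delta))$, with an absolute $C_0$ (of order $10^2$ if tracked).

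\textbf{The constant is not obtained.} The specific constant $24$ and the form $\max\{k,\log(2k/\delta)\}$ do not follow by ``absorbing constants.'' For the statement exactly as quoted you must track constants explicitly or, as the paper does, cite Tropp's matrix-Laplace/comparison argument.
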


We are now ready to show the main result of this section: A random matrix composed of Gaussian random TT vectors of rank $r \geq d-1$ is an OSI. 
\begin{theorem}
    \label{tm:osi}
    Consider a random matrix $\Omega \in \R^{n^d \times \ell}$ defined as
    \begin{equation}
        \label{eq:Omega-with-TT-rows}    
        \Omega = \frac{1}{\sqrt{\ell}} \mb{cccc} \omega_{1} & \omega_{2} & \ldots & \omega_{\ell} \me,
    \end{equation}
    where $\omega_{j}$ are independent Gaussian random TT vectors of rank $r$. Suppose that
    $$
        \ell \geq 72 \cdot \Big(1+\frac{2}{r}\Big)^{d-1} \eps^{-2}\max\{k, \log(2k/\delta)\}.
    $$
    Then $\Omega$ is a $(1-\eps, \delta, k)$-OSI.
    In particular, when choosing $r \geq d-1$ and $\ell \geq 72 e^2 \cdot \eps^{-2} \max\{k, \log(2k/\delta)\}$, then $\Omega$ is a $(1-\eps, \delta, k)$-OSI.    
\end{theorem}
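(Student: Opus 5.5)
The plan is to reduce the OSI property to the smallest-eigenvalue bound of Theorem~\ref{tm:tropp-mineig}, applied to the projection of the TT vectors onto the target subspace. Isotropy is immediate: by Lemma~\ref{lemma:isotropic}, $\EE[\Omega\Omega^T] = \frac{1}{\ell}\sum_{j=1}^\ell \EE[\omega_j\omega_j^T] = I_{n^d}$. Now fix a $k$-dimensional subspace $\mathcal{U}\subset\R^{n^d}$ and an orthonormal basis $U\in\R^{n^d\times k}$. Set $\tilde\omega := U^T\omega \in \R^k$ and $\tilde\omega_j := U^T\omega_j$. These are independent copies of $\tilde\omega$, which is centered with covariance $U^T\EE[\omega\omega^T]U = I_k$.

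For $u = U\tilde u\in\mathcal{U}$ we have $\|u\| = \|\tilde u\|$ and
\[
\|\Omega^T u\|_2^2 = \frac{1}{\ell}\sum_{j=1}^\ell (\omega_j^T U\tilde u)^2 = \tilde u^T Y \tilde u,
\qquad Y := \frac{1}{\ell}\sum_{j=1}^\ell \tilde\omega_j\tilde\omega_j^T .
\]
Hence \eqref{eq:osi} holds for all $u\in\mathcal{U}$ exactly when $\lam_{\min}(Y)\ge 1-\eps$. It therefore suffices to verify the fourth-moment hypothesis of Theorem~\ref{tm:tropp-mineig} with a suitable constant $L$.

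For unit $\tilde u\in\R^k$, the vector $v := U\tilde u$ is a unit vector in $\R^{n^d}$ and $\tilde u^T\tilde\omega = v^T\omega$. Theorem~\ref{tm:moment} with $p=2$ then gives
\[
\EE[(\tilde u^T\tilde\omega)^4] = \EE[(v^T\omega)^4] \le 3\Big(1+\frac{2}{r}\Big)^{d-1} =: L .
\]
Plugging this into Theorem~\ref{tm:tropp-mineig}, the condition $\ell \ge 24L\eps^{-2}\max\{k,\log(2k/\delta)\} = 72(1+\frac2r)^{d-1}\eps^{-2}\max\{k,\log(2k/\delta)\}$ ensures $\PP{\lam_{\min}(Y)<1-\eps}\le\delta$. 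This is exactly the assumed bound on $\ell$, and it proves the OSI property.

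For the special case $r\ge d-1$, we use $(1+\frac2r)^{d-1}\le e^{2(d-1)/r}\le e^2$, so $\ell\ge 72e^2\eps^{-2}\max\{k,\log(2k/\delta)\}$ suffices.

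There is no real obstacle here. The only point needing care is that the fourth-moment bound must hold uniformly over all unit directions in the subspace. Theorem~\ref{tm:moment} provides this because its bound holds for every unit vector $u$, not only for rank-one ones.
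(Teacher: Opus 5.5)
Your proposal is correct and follows the paper's proof essentially step for step. Both reduce the OSI property to $\lambda_{\min}(Y)\ge 1-\eps$ for $Y=\frac{1}{\ell}\sum_i U^T\omega_i\omega_i^TU$, bound the fourth moment by $3(1+2/r)^{d-1}$ via Theorem~\ref{tm:moment} with $p=2$, and invoke Theorem~\ref{tm:tropp-mineig}. The only cosmetic difference is that you bound $(1+2/r)^{d-1}\le e^{2(d-1)/r}\le e^2$ directly, whereas the paper passes through $(1+\frac{2}{d-1})^{d-1}$.
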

\begin{proof}
We first note that $\EE\big[\Omega \Omega^T] = I_{n^d}$ follows directly from Lemma~\ref{lemma:isotropic}.
    Proving the OSI property \eqref{eq:osi} is equivalent to proving that 
    $$  
        1-\eps \leq \|\Omega^T U x\|_2^2, \quad \forall x \in \R^k, \ \|x\|_2=1,
    $$
    holds with probability at least $1-\delta$ for every fixed orthonormal matrix $U \in \R^{n^d \times k}$. In turn, this is equivalent to proving
    \begin{equation}
        \label{eq:osi-temp1}
        \lam_{\min}(Y) \geq 1-\eps, \quad
        Y = (\Omega^T U)^T \Omega^T U = \frac{1}{\ell} \sum_{i=1}^\ell \tilde{\omega}_i \tilde{\omega}_i^T,
    \end{equation}
    where $\tilde{\omega}_i = U^T \omega_{i}$. The vectors $\tilde{\omega}_i \in \R^k$ are independent, centered, and have the same distribution law as $\tilde{\omega} := U^T \omega$.
    We also have $\EE\big[\tilde{\omega} \tilde{\omega}^T] = U^T \EE\big[{\omega} {\omega}^T] U = I_k$.
    Theorem \ref{tm:moment} for $p=2$ gives
    \begin{equation}
        \label{eq:osi-temp2}        
        \EE[(\tilde{u}^T \tilde{\omega})^4]  
            = \EE[((U\tilde{u})^T \omega)^4] 
            \leq 3(1+2/r)^{d-1}, \quad \forall \tilde{u} \in \R^{k}, \|\tilde{u}\|_2=1.
    \end{equation}
    This concludes the proof; Theorem \ref{tm:tropp-mineig} combined with~\eqref{eq:osi-temp2} implies~\eqref{eq:osi-temp1} (and, hence, the desired OSI property)
    if the stated lower bound on $\ell$ holds. 
    The final claim of the theorem follows from $(1+\frac{2}{r})^{d-1} \leq (1+\frac{2}{d-1})^{d-1} \leq e^2$ for $r \geq d-1$.
\end{proof}

In the case $r \geq d-1$, the bound for the number of samples stated in Theorem~\ref{tm:osi} is only up to a constant $e^2$ worse than the bound that would be obtained from
Theorem~\ref{tm:tropp-mineig} for a matrix whose columns are unstructured standard Gaussian vectors.

\begin{remark}
Theorem~13 also clarifies the discussion of tensor-train adapted
sketches in~\cite[Table~1]{CazeauxDupuyJustiniano26}. In the notation
of~\cite{CazeauxDupuyJustiniano26}, let $R$ denote the TT rank and $P$ the number of
sampled TT vectors. The sketching map induced by $\Omega^T$ in this work corresponds to Gaussian TT random
projection $f_{\mathrm{TT}(R)}$ in~\cite{CazeauxDupuyJustiniano26}. Theorem~\ref{tm:osi} shows that, for
$R\geq d-1$, this map is a $(1-\varepsilon,\delta,k)$-OSI as soon as
$P = \redvel(
\varepsilon^{-2}
\max\{k,\log(2k/\delta)\}
).$
Thus, compared with the TTStack sketch of~\cite{CazeauxDupuyJustiniano26},
the sketch dimension is smaller by a factor of order $d$. 

Let us also briefly comment on the corresponding oblivious subspace
embedding property, where one requires both lower and upper norm
preservation. A standard $\varepsilon$-net argument, combined with the
moment bound of Theorem~\ref{tm:moment} and Proposition~\ref{prop:Lp-mean-bound} applied to the centered
variables
$(u^T\omega)^2 - 1$, with $u\in U$, $\|u\|_2=1$,
suggests that the same sketch dimension
$
P =
\mathcal O(
\varepsilon^{-2}
(k+\log(1/\delta))
)
$
should suffice for an oblivious subspace embedding, provided the TT rank
is increased to
$
R \gtrsim d\,(k+\log(1/\delta)).$
Indeed, with such a rank choice, the $L_p$-norm inflation factor
$\exp(p(d-1)/R)$ obtained in Theorem~\ref{tm:moment} remains bounded for moment orders
$p=\redvel(k+\log(1/\delta))$, which are the orders naturally required after
taking a union bound over a net of the unit sphere in a $k$-dimensional
subspace. We do not pursue this here, since it would essentially match
the parameter regime already obtained for TTStack in
\cite{CazeauxDupuyJustiniano26}, rather than improving it.
\end{remark}

\paragraph{Stochastic trace estimation with Nystr\"{o}m++.}

OSIs can be used to compute (randomized) low-rank approximations with probabilistic guarantees~\cite{CamanoEpperlyMeyerTropp25}. 
We recall a slightly rephrased result~\cite[Corollary 2.5]{CamanoEpperlyMeyerTropp25} when using the Nystr\"{o}m approximation.
Given a symmetric positive semidefinite matrix $A \in \R^{N \times N}$ and a random matrix $\Omega \in \R^{N \times \ell}$, $\ell \ll N$, the Nystr\"{o}m approximation is computed as
\begin{equation}
    \label{eq:nystrom}
    \hat{A} = X(\Omega^T X)^\pinv X^T, \quad \text{where } X = A\Omega.
\end{equation}
The matrix $\hat{A}$ is a rank-$\ell$ approximation of $A$. It is well known that the residual $A-\hat{A}$ is again symmetric positive semidefinite. Hence, the
nuclear norm $\|A-\hat{A}\|_\ast$ coincides with the trace of $A-\hat{A}$.

The following consequence of the proof of~\cite[Corollary 2.5]{CamanoEpperlyMeyerTropp25} will be used.
\begin{theorem}
    \label{tm:nystrom-approx}
    Consider symmetric positive semidefinite $A \in \R^{N \times N}$ and $k \leq N$. For an ($1-\eta$, $\delta$, $k$)-OSI $\Omega \in \R^{N \times \ell_1}$,
    construct the Nystr\"{o}m approximation $\hat{A}$ defined in~\eqref{eq:nystrom}.
    Then, for $M > 1$ and $0 < \eta,\delta < 1$, the inequality 
    $$
        \|A - \hat{A}\|_\ast \leq \Big(1 + \frac{M}{1-\eta}\Big) \cdot \|A - A_k\|_\ast,
    $$
    holds with probability at least $1-\frac{1}{M} - \delta$,
    where $A_k$ denotes a best rank-$k$ approximation of $A$ in the nuclear norm.
\end{theorem}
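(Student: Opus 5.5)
We will follow the standard Nyström error analysis behind \cite[Corollary 2.5]{CamanoEpperlyMeyerTropp25}. The argument has three parts: a deterministic error bound conditional on the injection event, Markov's inequality on a single nonnegative random quantity, and a union bound.

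\textbf{Deterministic setup.} Let $A = U\Lambda U^T$ be an eigendecomposition. Split $U = [U_1\; U_2]$, where $U_1 \in \R^{N\times k}$ holds the dominant $k$ eigenvectors, and correspondingly $\Lambda = \mathrm{diag}(\Lambda_1,\Lambda_2)$. Then $\|A-A_k\|_\ast = \trace(\Lambda_2)$. Set $\Omega_1 = U_1^T\Omega$ and $\Omega_2 = U_2^T\Omega$.

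\textbf{Reduction via $A^{1/2}$.} The Nyström residual satisfies $A-\hat A = A^{1/2}(I-P)A^{1/2}$, where $P$ is the orthogonal projector onto $\mathrm{range}(A^{1/2}\Omega)$. Hence $\|A-\hat A\|_\ast = \|(I-P)A^{1/2}\|_F^2$. Because $P$ is the best projector for its range, we may replace it by any matrix of the form $A^{1/2}\Omega Z$, which gives
\[
\|(I-P)A^{1/2}\|_F^2 \le \|A^{1/2} - A^{1/2}\Omega Z\|_F^2 .
\]
On the event that $\Omega_1$ has full row rank, we choose $Z = \Omega_1^\pinv U_1^T$ and follow the standard randomized SVD computation (the Halko--Martinsson--Tropp argument applied to $A^{1/2}$). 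This yields
\[
\|A-\hat A\|_\ast \le \trace(\Lambda_2) + \|\Lambda_2^{1/2}\Omega_2\Omega_1^\pinv\|_F^2 .
\]

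\textbf{Controlling the cross term.} The OSI property, applied to the fixed subspace $\mathrm{range}(U_1)$, gives with probability at least $1-\delta$ that $\sigma_{\min}(\Omega_1)^2 \ge 1-\eta$. In particular $\Omega_1$ has full row rank and $\|\Omega_1^\pinv\|_2^2 \le 1/(1-\eta)$. On this event,
\[
\|\Lambda_2^{1/2}\Omega_2\Omega_1^\pinv\|_F^2 \le \frac{\|\Lambda_2^{1/2}\Omega_2\|_F^2}{1-\eta}.
\]
The right-hand numerator does not involve any conditioning, so it can be handled unconditionally. Using $\EE[\Omega\Omega^T]=I_N$, its expectation is
\[
\EE\,\|\Lambda_2^{1/2}U_2^T\Omega\|_F^2 = \trace(\Lambda_2^{1/2}U_2^T U_2\Lambda_2^{1/2}) = \trace(\Lambda_2).
\]
Markov's inequality then gives $\|\Lambda_2^{1/2}\Omega_2\|_F^2 \le M\,\trace(\Lambda_2)$ with probability at least $1-1/M$.

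\textbf{Conclusion.} A union bound over the two failure events shows that, with probability at least $1-1/M-\delta$,
\[
\|A-\hat A\|_\ast \le \Big(1+\frac{M}{1-\eta}\Big)\|A-A_k\|_\ast .
\]

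\textbf{Main obstacle.} The delicate step is the deterministic bound above. One must verify that the choice of $Z$ produces exactly the residual $U_2\Lambda_2^{1/2}$ plus the cross term, with no additional factor. This follows from the Pythagorean split of $A^{1/2} - A^{1/2}\Omega\Omega_1^\pinv U_1^T$ into components along $U_1$ and $U_2$. The $U_1$ block vanishes because $\Omega_1\Omega_1^\pinv = I_k$ when $\Omega_1$ has full row rank. The $U_2$ block is $\Lambda_2^{1/2}(U_2^T - \Omega_2\Omega_1^\pinv U_1^T)$, and since $U_1$ and $U_2$ have orthogonal ranges, its squared Frobenius norm splits into $\trace(\Lambda_2) + \|\Lambda_2^{1/2}\Omega_2\Omega_1^\pinv\|_F^2$.
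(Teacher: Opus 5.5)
Your proof is correct and follows the standard argument the paper relies on by citing the proof of \cite[Corollary 2.5]{CamanoEpperlyMeyerTropp25}; the paper supplies no proof of its own. That argument reduces $\trace(A-\hat A)$ to the randomized-SVD Frobenius error of $A^{1/2}$, applies the structural bound, controls $\|\Omega_1^\pinv\|_2^2\le 1/(1-\eta)$ on the OSI event, and handles $\|\Lambda_2^{1/2}\Omega_2\|_F^2$ by isotropy, Markov's inequality and a union bound. Applying Markov unconditionally to this tail term alone, rather than to the whole error on the OSI event, is exactly what gives the constant $1+M/(1-\eta)$ with failure probability $1/M+\delta$.
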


Equipped with a Nystr\"{o}m approximation $\hat{A} = X(\Omega^T X)^\pinv X^T$, $X = A\Omega$, and by using the cyclic property of the trace, one can approximate $\trace(A)$ as
\begin{align*}
    \trace(A)
        &= \trace(\hat{A}) + \trace(A - \hat{A})
        = \trace((\Omega^T X)^\pinv X^T X) + \trace(A - \hat{A}) \\
        &\approx \trace((\Omega^T X)^\pinv X^T X) + H_{\ell_2}(A - \hat{A}) \\
        &= \trace((\Omega^T X)^\pinv X^T X) + \frac{1}{\ell_2}( \trace(\Psi^T Y) - \trace(\Psi^T X(\Omega^T X)^\pinv (X^T \Psi) ) ) \\
        &=: \trace^{\text{N++}}_{\ell_1, \ell_2}(A).
\end{align*}
Here the $\ell_2$ samples used to compute the Girard--Hutchinson estimate $H_{\ell_2}(\Delta)$ of the matrix $\Delta := A - \hat{A}$ have been arranged as columns of the matrix $\Psi \in \R^{N \times \ell_2}$, and $Y = A\Psi$. This is the Nystr\"{o}m++ trace estimator \cite{PerssonCortinovisKressner22}; see also Algorithm \ref{alg:nystromPP}. 

Let us now fix the target probability for the Nystr\"{o}m++ estimate to, e.g., $\delta = 1/10$, and let the target relative accuracy be any $\eps > 0$.
Set the ranks of the Gaussian random TT vectors to $r \geq d-1$. Since $A = \hat{A} + \Delta$, and $\trace^{\text{N++}}_{\ell_1, \ell_2}(A) = \trace(\hat{A}) + H_{\ell_2}(\Delta)$, Theorem~\ref{tm:hutch-4th-moment} (conditioned on $\Omega$) shows that, for given $\tilde{\eps} > 0$, the inequality
\begin{equation}
    \label{eq:nystrom-temp}    
    |\trace^{\text{N++}}_{\ell_1, \ell_2}(A) - \trace(A)|
        = |H_{\ell_2}(\Delta) - \trace(\Delta)|
        \leq \tilde{\eps} \, \trace(\Delta)
\end{equation}
holds 
with probability at least $1-\tilde{\delta}$, provided that $\ell_2 = \redvel(\tilde{\eps}^{-2} \tilde{\delta}^{-1})$. We set $\tilde{\delta} = 1/20$ and fix $\tilde{\eps}$ later. Using that  
$
    \trace(\Delta) = \|A - \hat{A}\|_\ast,
$
we apply Theorem \ref{tm:nystrom-approx} with $M = 40$, $\eta = 1/2$, and $\delta=1/40$. Therefore, with probability at least $1-1/20$, we have $\trace(\Delta) \leq 81 \|A - A_k\|_\ast$ assuming that $\ell_1 = \redvel(k)$ (using Theorem \ref{tm:osi}), with $k$ specified later on. Combined with \eqref{eq:nystrom-temp} through the union bound, we have that
\begin{equation}
    \label{eq:nystrom-temp-1}    
    |\trace^{\text{N++}}_{\ell_1, \ell_2}(A) - \trace(A)|
        \leq 81 \tilde{\eps} \|A-A_k\|_\ast
\end{equation}
holds with probability at least $1-1/10$.
To connect $\|A-A_k\|_\ast$ with $\trace(A)$, we rewrite
\begin{equation}
    \label{eq:nystrom-temp-2}    
    \|A-A_k\|_\ast
        = \tau_k(A)\cdot \trace(A), \quad \text{with}\quad 
\tau_k(A)
:=
\frac{\|A-A_k\|_*}{\operatorname{tr}(A)}
=
\frac{\sum_{j>k}\lambda_j(A)}{\sum_j \lambda_j(A)},
\end{equation}
where $\lambda_j(\cdot)$ denotes the $j$th largest eigenvalue of a matrix.
The factor $\tau_k(A)$ can be viewed as a relative spectral tail mass;
it obviously satisfies $0 \leq \tau_k(A) \leq 1$ but a more precise estimate depends on the eigenvalue decay of the residual $A-A_k$.

Combining \eqref{eq:nystrom-temp-1} and \eqref{eq:nystrom-temp-2}, we obtain
$$
    |\trace^{\text{N++}}_{\ell_1, \ell_2}(A)-\trace(A)|
    \leq 81\tilde{\eps}\,\tau_k(A)\trace(A).
$$
Assuming $\tau_k(A)>0$, we choose
$
    \tilde{\eps} = \eps / (81\,\tau_k(A)).
$
Then
$$
    |\trace^{\text{N++}}_{\ell_1, \ell_2}(A)-\trace(A)|
        \leq \eps\,\trace(A)
$$
holds with probability at least $9/10$. In this case,
$\ell_1 = \redvel(k)$,
$\ell_2 = \redvel(\eps^{-2}\tau_k^2(A))$.
In particular, setting $k=\lceil 1/\eps\rceil$ and assuming
$\tau_k(A)=\redvel(\sqrt{\eps})$, then
$\ell_1=\redvel(\eps^{-1})$,
$\ell_2=\redvel(\eps^{-1})$.
In other words, we obtain $\redvel(\eps^{-1})$ complexity if the leading $\redvel(\eps^{-1})$ eigenvalues capture
all but an $\redvel(\sqrt{\eps})$ 
fraction of the trace.

While the dependence on the number of samples is often reduced from $\redvel(\eps^{-2})$ to $\redvel(\eps^{-1})$, our results indicate that the Nystr\"om++ algorithm will not be effective with random TT vectors unless there is some eigenvalue decay, that is, $\tau_k(A)$ remains small. The following discussion suggests that this requirement is not an artifact  of our analysis,
but it is tied to the absence of a uniform Frobenius-scale variance bound for low-rank Gaussian TT vectors.
More precisely, Theorem \ref{tm:hutch-4th-moment} ensures that 
$$
    |H_{\ell}(A) - \trace(A)| \leq \eps \trace(A),
$$
holds with high probability, where the number of samples $\ell$ does not depend on properties of $A$. Contrast this with \cite[Theorem 5]{CortinovisKressner22}, where the same sample complexity for unstructured Gaussian random vectors achieves
\begin{equation}
    \label{eq:trace-err-fnorm}    
    |H_{\ell}(A) - \trace(A)| \leq \eps \|A\|_F.
\end{equation}

This Frobenius-scale bound is the key ingredient in the standard
analysis of Nystr\"om++ with unstructured Gaussian vectors; it allows one
to control the residual trace estimator in terms of $\|A-\hat A\|_F$
rather than $\trace(A-\hat A)$. In contrast, Theorem~\ref{tm:hutch-4th-moment} only gives a
trace-scale bound for Gaussian TT vectors. This difference is not merely
an artifact of the proof. A uniform Frobenius-scale variance estimate of
the form
$
\operatorname{Var}[\omega^T A\omega]
\leq
C\|A\|_F^2
$
with a constant $C$ independent of $n$ and $d$ cannot hold for Gaussian TT
vectors of moderate rank.
Indeed, take $A=I_N$ with $N=n^d$. Let $G=\Omega^{[1]}\in\mathbb R^{n\times r}$
denote the first TT core of $\omega$. Conditioning on $G$ and averaging
over the remaining TT cores gives
$$
    \mathbb E\bigl[\|\omega\|_2^2\,\big|\,G\bigr]
        = \frac{n^{d-1}}{r}\|G\|_F^2.
$$
Since $\|G\|_F^2$ is chi-squared with $nr$ degrees of freedom, the law of
total variance yields
$$
    \operatorname{Var}[\omega^T I_N\omega]
        = \operatorname{Var}[\|\omega\|_2^2]
        \geq \operatorname{Var}!\left( \mathbb E\bigl[\|\omega\|_2^2\,\big|\,G\bigr]\right) 
        = \frac{n^{2d-2}}{r^2}\operatorname{Var}(\|G\|_F^2)
        = \frac{2n^{2d-1}}{r}.
$$
On the other hand, $\|I_N\|_F^2=N=n^d$. Hence
$$
    \frac{\operatorname{Var}[\omega^T I_N\omega]}{\|I_N\|_F^2}
        \geq \frac{2n^{d-1}}{r}.
$$
This ratio grows rapidly with the tensor order, unless $r$ is chosen exponentially large. Consequently, some dependence on the spectral tail, such as
the relative tail mass $\tau_k(A)$ appearing above, is natural.

\section{Conclusions}

We have shown that Gaussian random TT vectors are effective in stochastic trace estimation. In contrast to random rank-one tensor or Khatri--Rao test vectors, whose sample complexity can grow exponentially with the tensor order, Gaussian TT vectors recover dimension-independent trace-estimation guarantees once the TT rank is chosen at least proportional to the tensor order. In particular, with rank $r\geq d-1$, a median-of-means Girard--Hutchinson estimator achieves the same dependence on $\varepsilon$ and $\delta$ as the classical estimator with unstructured Gaussian vectors.

We have also established an oblivious subspace injection result for sketches formed from independent Gaussian TT vectors. For TT rank $r\geq d-1$, only
$
\mathcal O\!\left(\varepsilon^{-2}(k+\log(1/\delta))\right)
$
samples are sufficient for a $k$-dimensional target subspace. This improves the sketch dimension required by TTStack-type constructions by a factor of order $d$, while retaining a TT rank that grows only linearly with the tensor order.
Finally, we investigated the use of Gaussian TT sketches within Nystr\"om++ trace estimation. The resulting estimator can achieve the desired $O(\varepsilon^{-1})$ sample complexity under a natural spectral-tail condition.

The present work is theoretical in nature and does not address the design of robust and efficient implementations. Developing practical algorithms that fully exploit the TT structure while controlling rank growth, rounding errors, and computational overhead requires significant additional work and is beyond the scope of this paper.

\section{Funding}
The work of ZB and HO was supported by the Croatian Science Foundation grant IP-2025-02-3733 (``Data driven identification and dimension reduction of dynamical systems''). ZB was also supported by the project ``Implementation of cutting-edge research and its application as part of the Scientific Center of Excellence for Quantum and Complex Systems, and Representations of Lie Algebras'', Grant No.~PK.1.1.10.0004, co-financed by the European Union through the European Regional Development Fund - Competitiveness and Cohesion Programme 2021--2027.

\bibliographystyle{plain}
\bibliography{references}

\end{document}